\def\eqref#1{Eq.~\ref{#1}}
\def\1{\bm{1}}
\def\rvc{{\mathbf{c}}}
\def\rvx{{\mathbf{x}}}
\def\meps{{\bm{\epsilon}}}
\DeclareMathAlphabet{\mathsfit}{\encodingdefault}{\sfdefault}{m}{sl}
\SetMathAlphabet{\mathsfit}{bold}{\encodingdefault}{\sfdefault}{bx}{n}
\def\gD{{\mathcal{D}}}
\def\gL{{\mathcal{L}}}
\def\gN{{\mathcal{N}}}
\DeclareMathOperator*{\argmin}{arg\,min}
\newtheorem{theorem}{Theorem}[section]
\newtheorem{lemma}[theorem]{Lemma}
\theoremstyle{definition}
\newtheorem{definition}[theorem]{Definition}
\theoremstyle{remark}
\title{Text2Data: Low-Resource Data Generation with Textual Control}
\author{
    %Authors
    % All authors must be in the same font size and format.
    Shiyu Wang,
    Yihao Feng,
    Tian Lan,
    Ning Yu,
    Yu Bai,
    Ran Xu,
    Huan Wang\thanks{Corresponding author: huan.wang@salesforce.com},
    Caiming Xiong,\\
    Silvio Savarese
}
\begin{document}

\maketitle
\vspace{-12mm}
\begin{abstract}
Natural language serves as a common and straightforward signal for humans to interact seamlessly with machines. Recognizing the importance of this interface, the machine learning community is investing considerable effort in generating data that is semantically coherent with textual instructions. While strides have been made in text-to-data generation spanning image editing, audio synthesis, video creation, and beyond, low-resource areas characterized by expensive annotations or complex data structures, such as molecules, motion dynamics, and time series, often lack textual labels. This deficiency impedes supervised learning, thereby constraining the application of advanced generative models for text-to-data tasks. In response to these challenges in the low-resource scenario, we propose Text2Data, a novel approach that utilizes unlabeled data to understand the underlying data distribution through an unsupervised diffusion model. Subsequently, it undergoes controllable finetuning via a novel constraint optimization-based learning objective that ensures controllability and effectively counteracts catastrophic forgetting. Comprehensive experiments demonstrate that Text2Data is able to achieve enhanced performance regarding controllability across various modalities, including molecules, motions and time series, when compared to existing baselines.
\textbf{Code}\textemdash https://github.com/SalesforceAIResearch/text2data
\textbf{Extended version}\textemdash https://arxiv.org/abs/2402.10941
\end{abstract}

\section{Introduction}
Autonomy and controllability stand as twin primary pillars of generative AI~\citep{gozalo2023chatgpt, wang2022controllable}. While the challenge of autonomy has been substantially addressed through the rapid advancements of generative models, controllability is now ascending as a fervently explored arena within the machine learning community. As natural languages are one of the most common and simplest control signal for human beings to interact with machines, the machine learning community has increasingly focused on generating data that aligns semantically with textual descriptions, given its wide-ranging applications such as image editing~\citep{zhang2023sine, kawar2023imagic}, audio synthesis~\citep{pmlr-v202-liu23f, huang2023make}, video generation~\citep{li2018video, hu2022make}, and many more~\citep{tevet2023human, sanghi2022clip}. 

Recent breakthroughs in text-to-data generative models, particularly those using diffusion ~\citep{li2023diffusion, yang2023diffsound, kumari2023multi}, have demonstrated remarkable proficiency by harnessing the rich semantic insights from vast datasets of data-text pairs. Despite the broad application of text-to-data generative models, not all modalities can meet the substantial data-text pair requirements for achieving optimal controllability during model training. This is often due to costly annotations or intricate data structures, a scenario we refer to as the low-resource situation. The lack of text labels in certain areas, such as molecules~\citep{ramakrishnan2014quantum, irwin2012zinc}, motions~\citep{guo2020action2motion, mahmood2019amass}, and time series~\citep{du2020multivariate}, primarily restricts supervised learning and hinders the use of advanced generative models for text-to-data generation tasks. The low-resource situation when training generative models unsurprisingly results in issues like undesirable generation quality, model overfitting, bias, and lack of diversity. However, the optimization for scarce text representations to improve the alignment between generated data and input texts in generative models is still under-explored.

To mitigate the issues in the low-resource scenario, strategies such as data augmentation~\citep{hedderich2020survey, meng2021mixspeech}, semi-supervised learning~\citep{thomas2013deep, cheuk2021reconvat}, and transfer learning~\citep{tits2020exploring, yi2018language} are utilized. Yet, each comes across challenges. Data augmentation, for example, cannot always replicate genuine data fidelity to align accurately with initial text descriptions, and potentially leads to overfitting due to over-reliance on augmented samples. It also exacerbates the training complexity, intensifying the already high computational demand of diffusion models. For semi-supervised learning, text inherently carries nuances, ambiguities, and multiple meanings. Ensuring that the model infers the correct interpretation when leveraging unlabeled data is not straightforward. Lastly, while transfer learning offers a solution for limited datasets, it is prone to catastrophic forgetting~\citep{iman2023review}, where previous knowledge diminishes as new text descriptions are introduced.

Alternative to existing solutions, we propose Text2Data, a diffusion-based framework achieving enhanced text-to-data controllability even under low-resource situation. Specially, Text2Data operates in two pivotal stages: \textbf{(1) Distribution mastery by leveraging unlabeled data}. Distinct from conventional semi-supervised learning methods, Text2Data does not aim to deduce labels for unlabeled data. Instead, this step uses unlabeled data to discern the overarching data distribution via an unsupervised diffusion model, eliminating the semantic ambiguity often associated with semi-supervised approaches. \textbf{(2) Controllable finetuning on text-labeled data}. The learned diffusion model is then finetuned by text-labeled data. Distinct from methods reliant on data augmentation, Text2Data abstains from inflating the training dataset. Instead, we introduce a novel constraint optimization-based learning objective, aiming to mitigate catastrophic forgetting by regularizing the model parameter space closely to its preliminary space before finetuning. Our contributions are summarized as follows:
\begin{itemize}[leftmargin=*]
    \item We introduce Text2Data, a novel framework designed for text-to-data generation in the low-resource scenario. This approach maintains the fine-grained data distribution by fully harnessing both labeled and unlabeled data.
    \item We design a novel learning objective based on constraint optimization to achieve controllability and overcome catastrophic forgetting during finetuning.
    \item We theoretically validate optimization constraint selection and generalization bounds for our learning objective.
    \item We compile real-world datasets across three modalities and conduct comprehensive experiments to show the effectiveness of Text2Data. The results demonstrate that Text2Data achieves superior performance baselines regarding both generation quality and controllability.
    % \vspace{-1mm}
\end{itemize}
\vspace{-2mm}
\section{Related works}
\label{sec:related_works}
\subsection{Text-to-data diffusion-based generation}
Diffusion models, notably divided into classifier-guided~\citep{dhariwal2021diffusion} and classifier-free~\citep{ho2022classifier} categories, have significantly impacted data generation across various domains.~\citep{hoogeboom2022equivariant, yang2023diffsound,ho2022imagen, voleti2022mcvd}. The classifier-guided diffusion guides the model during inference phase by independently training a classifier and supervising the model with its gradient, which is inefficient when computing gradient at each time step and sometimes the generation quality is deficient as the guidance is not involved in the training. By contrast, classifier-free diffusion guidance blends score estimates from both a conditional diffusion model and an unconditional one with time step as a parameter, exemplified by E(3) Equivariant Diffusion Model (EDM)~\citep{hoogeboom2022equivariant} and Motion Diffusion Model (MDM)~\citep{tevet2023human} for controllable molecule and motion generation, respectively. Furthermore, since natural languages are a prevalent medium for human to communicate with the world, the text-to-data generation paradigm has gained traction, with diffusion models being instrumental in generating high-quality data aligned with textual inputs. The extensive applications encompass text-to-image generation~\citep{ruiz2023dreambooth, zhang2023adding}, text-to-speech generation~\citep{huang2022prodiff, kim2022guided}, text-to-shape generation~\citep{li2023diffusion, lin2023magic3d}, and more, leveraging the abundant text descriptions for training potent generative models. Despite advancements in generating data from text across various modalities, many other modalities may not satisfy the stringent requirements for sufficient data-text pairs essential for attaining optimal controllability during the training of models.

% \ning{Try to consolidate the above two subsections into one because they both discuss about controllable diffusion models. In the end, discuss the relationship between this subsection and our work. Do we build on top of it, or how to differentiate our work from existing ones?}
\subsection{Low-resource learning}
In response to the challenges of low-resource training for controllable generative models, several strategies have been formulated. For instance, \citet{yin2023ttida} proposes Text-to-Text-to-Image Data Augmentation that employed both large-scale pretrained Text-to-Text and Text-to-Image generative models for data augmentation to generate photo-realistic labeled images in a controllable manner. \citet{zang2019semi} utilizes a semi-supervised approach to augment the training of both the encoder and decoder with both labeled and unlabeled data. \citet{tu2019end} proposes to learn a mapping between source and target linguistic symbols and employs transfer learning to transfer knowledge from a high-resource language to low-resource language. Nevertheless, all those strategies have their own limitations, such as high computational complexity for data augmentation, difficulty in maintaining the correct interpretation of text when leveraging unlabeled data during semi-supervised learning, and the potential catastrophic forgetting issues in transfer learning. Additionally, most of works on text-to-data generation lie in the modality of image, speeches, and texts, which have plenty of labeled datasets~\citep{ljspeech17, Pratap2020MLSAL, lin2014microsoft, jiang2021talkedit, wang2020covost} to train the models nowadays. Yet it is far under-explored for the low-source modalities such as molecules, motions and time series. Therefore, we propose Text2Data, a diffusion-based framework adept at harnessing limited text-labeled data to enhance the controllability of the model in text-to-data generation. 
\vspace{-2mm}
\section{Problem formulation}
\label{sec:problem_formulation}
Suppose the dataset $\gD=\{\rvx, \rvc\}$ contains $N$ independent samples in total, where $\rvx=\{\rvx_i\}_{i=1}^N$ is the data samples such as molecules, motions, time series, etc. We assume that there is only a proportion of data in $\rvx$ that has corresponding text description $\rvc=\{\rvc_i\}_{i=1}^{N_p}$ where $N_p\le N$. We denote that data with text description is contained in $\gD_p$ and $\gD_p\subset\gD$. Using both text-labeled and unlabeled data in $\gD$, we aim to learn a generative model, $p_\theta(\rvx\vert \rvc)$ parameterized by $\theta$ that is able to generate data $\rvx\sim p_\theta(\rvx\vert \rvc)$ corresponding to specific text description $\rvc = \rvc^*$.
\begin{figure}[h]
\begin{center}
\includegraphics[width=0.45\textwidth]{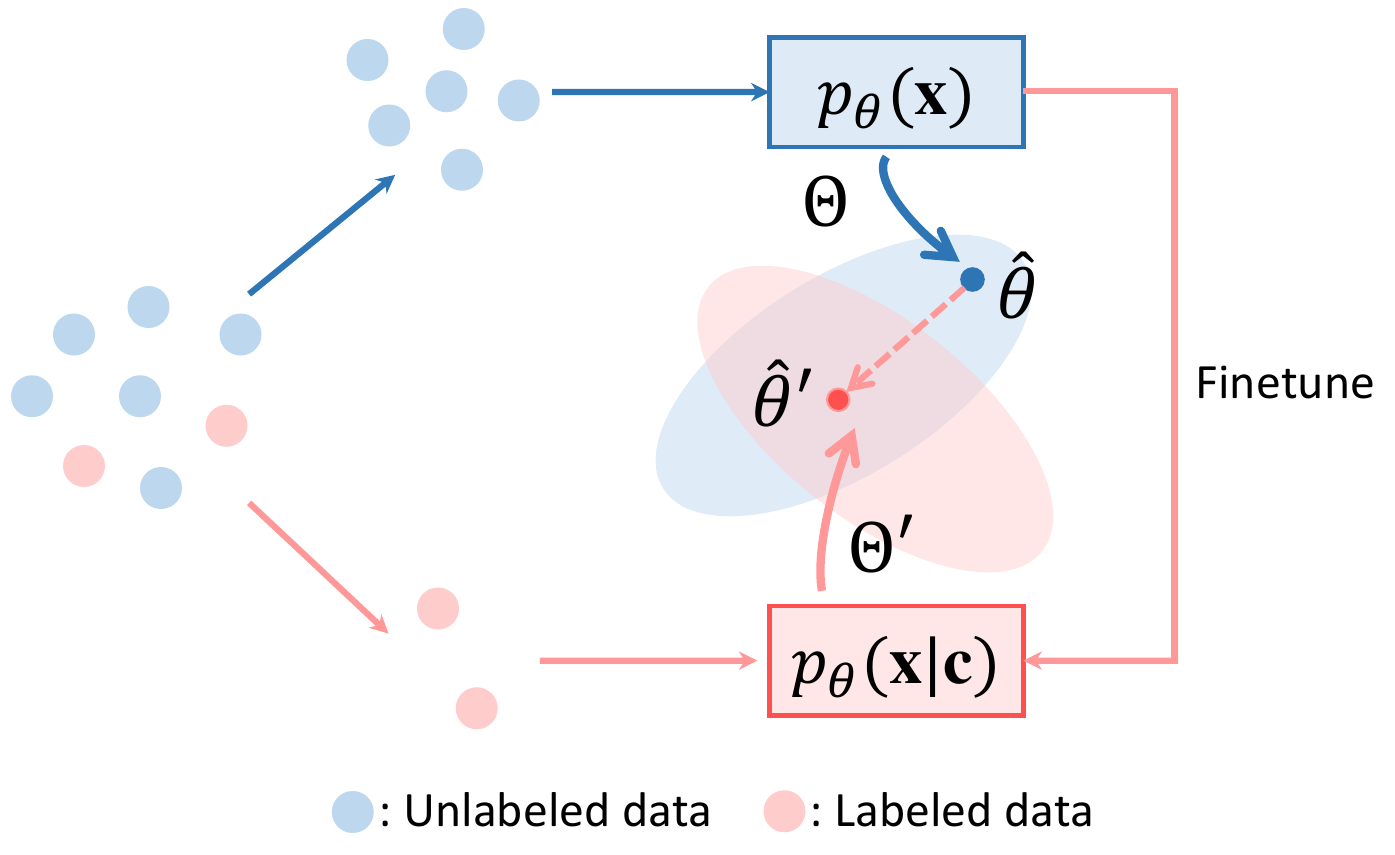}
\caption{Overview of Text2Data. The model leverages unlabeled data (i.e., blue module) to discern the overall data distribution while the optimal set of model parameters $\Theta$ is obtained. Then the model is finetuned on labeled data (i.e., red module) by constraint optimization that gives the optimal set of parameters as $\Theta\cap\Theta'$, where $\Theta'$ is the optimal set of parameters if finetune the model without constraint.}
\label{fig:model}
\end{center}
\vspace{-2mm}
\end{figure}
\vspace{-2mm}
\section{Methods}
\label{sec:methods}
% In this section, we unpack the intricacies of Text2Data. Section~\ref{sec:step1} covers how we utilize all unlabeled data to grasp the overarching data distribution. Section~\ref{sec:step2} elaborates on achieving controllability with a limited subset of text-labeled data. Finally, Section~\ref{sec:bound} offers theoretical insights into the selection of the optimization constraint, underpinned by generalization bounds for our learning objective.

Controllable data generation seeks to learn the conditional data distribution $p_\theta(\rvx\vert \rvc)$ during training and subsequently draw samples from this assimilated distribution during the inference stage. Consequently, our primary objective during the training phase is to optimize the following:
\begin{align}
    \min_\theta~\mathbb E_{\rvx, \rvc\sim p_{\gD_p}(\rvx, \rvc)}[-\log p_\theta(\rvx\vert\rvc)]\,.
    \label{eq:primary_l}
\end{align}
While the training of such generative models is contingent upon the supervision of text descriptions present in the dataset (i.e., $\gD_p$), it is not always feasible to obtain an adequate number of data-text pairs to ensure optimal controllability (i.e., $\vert\gD_p\vert<\vert\gD\vert$), especially in modalities like molecular structures, motion patterns and time series. Such constraints can precipitate complications, including model overfitting when optimizing according to Eq. \ref{eq:primary_l}. Given the challenges, devising strategies to capitalize on the unlabeled data within $\gD$—which is often more accessible and cost-effective—is pivotal to effectively learn $p_\theta(\rvx\vert\rvc)$. 

Notably, the marginal distributions learned from unlabeled data closely resemble those obtained from labeled data:
\small
\begin{align}
    p_\theta(\rvx) \approx \int p_\theta(\rvx\vert\rvc) p_{\gD_{p}}(\rvc) d\rvc = \mathbb E_{\rvc\sim p_{\gD_p}(\rvc)}[ p_\theta(\rvx\vert \rvc)] ,
    \label{eq:approx}
\end{align}
\normalsize
where $p_{\gD_p}(\rvc)$ is the true underlying text generating distribution corresponding to $\rvx\in\gD_p$. Hence, as per Figure~\ref{fig:model}, we are inspired to initially utilize the unlabeled data in $\gD$ to learn $p_\theta(\rvx)$ and obtain the optimal set of model parameters $\hat{\theta}\in\Theta$, which serves as a robust approximation of $p_\theta(\rvx\vert\rvc)$. Then, we finetune it using data-text pairs in $\gD_p$ to achieve desired model control. Crucially, we ensure that the parameters remain in close proximity to those established when learning $p_\theta(\rvx)$ via constraint optimization to make Eq. \ref{eq:approx} hold while mitigating catastrophic forgetting. Figure~\ref{fig:model} shows our constraint to keep the finetuned parameters $\hat{\theta}'$ within $\Theta\cap\Theta'$, where $\Theta'$ represents the optimal parameters from unconstrained finetuning. Then we plug the generative diffusion implementation to our framework. Finally, we use the last section to offer theoretical insights into the selection of the optimization constraint, underpinned by generalization bounds for our learning objective.

\subsection{Learning text-to-data generation under low-resource situation}
\label{sec:overall}
To capture a general distribution of data, we leverage all data in $\gD$ incorporating NULL tokens as conditions (i.e., $\rvc=$ NULL) to facilitate subsequent finetuning. Specifically, we train the generative model $p_{\theta}(\rvx\vert\emptyset)$, where $\theta$ parameterizes the model and $\emptyset$ represents the NULL token in practice. As the NULL token is independent to $\rvx$, we also have $p_\theta(\rvx\vert\emptyset) = p_\theta(\rvx)$. Hence, we are equivalently optimizing the following:
\small
\begin{align}
    \min_\theta~\mathbb E_{\rvx\sim p_\gD(\rvx)}[-\log p_{\theta}(\rvx)],
    \label{eq:obj_l1}
\end{align}
\normalsize
where $p_\gD(\rvx)$ is the true underlying data generating distribution. Having discerned the general data distribution through Eq. \ref{eq:obj_l1}, we proceed to finetune $p_\theta(\rvx\vert \rvc)$ using text labels $\rvc$ in $\gD_p$ to achieve the controllability of model. In alignment with Eq. \ref{eq:approx}, the anticipated finetuned parameter should approximate the parameter optimized in Eq. \ref{eq:obj_l1}. We achieve this by finetuning $p_\theta(\rvx\vert \rvc)$ using the labeled data in $\gD_p$ within the optimal set obtained in Eq.~\ref{eq:obj_l1}, leading to the subsequent learning objective for the finetuning phase:
\small
\begin{align}
    \min_{\theta}~&\mathbb E_{\rvx, \rvc\sim p_{\gD_{p}}(\rvx, \rvc)}[-\log p_\theta(\rvx\vert\rvc)]\nonumber \\
    \text{s.t.}~&\mathbb E_{\rvx\sim p_{\gD_p}(\rvx)}[-\log p_\theta(\rvx)] \le \xi,\nonumber\\
    ~&\xi=\inf_{\theta\in\Theta} \mathbb E_{\rvx\sim p_{\gD}(\rvx)}[-\log p_\theta(\rvx)]
    \label{eq:l2}
\end{align}
\normalsize
where $p_{\gD_{p}}(\rvx, \rvc)$ is the true underlying data-text joint distribution. $\Theta$ denotes a localized parameter space where a minimum can be located. Specifically, we minimize $\mathbb E_{\rvx, \rvc\sim p_{\gD_{p}}(\rvx, \rvc)}[-\log p_\theta(\rvx\vert\rvc)]$ using the labeled data in $\gD_p$ within the optimal set $\{\theta: \mathbb E_{\rvx\sim p_{\gD_p}(\rvx)}[-\log p_\theta(\rvx)]\le \xi\}$ to make the parameter not far from learned via Eq. \ref{eq:obj_l1}, so that catastrophic forgetting is mitigated. The trade-off between the learning objective and its constraint aligns with the nature of a \textit{lexicographic optimization} problem~\citep{gong2021bi} and therefore can be solved in that context.

% As $p_{\gD_p}(\rvx, \rvc)$ is unknown so that we compute the empirical loss of Eq. \ref{eq:obj_l2}, which leads to the following empirical learning objective:
% \begin{align}
%     \min_{\theta}~&\mathbb E_{\rvx,\rvc\sim \hat{p}_{\gD_p}(\rvx\vert\rvc)}[-\log p_\theta(\rvx,\rvc)]\nonumber \\
%     \text{s.t.}~&\mathbb E_{\rvx\sim \hat{p}_{\gD_p}(\rvx)}[-\log p_\theta(\rvx)] \le \hat{\xi},~\text{where}~\hat{\xi}=\inf_{\theta\in\Theta} \mathbb E_{\rvx\sim \hat{p}_{\gD}(\rvx)}[-\log p_\theta(\rvx)]
%     \label{eq:emp_l2}
% \end{align}

\subsection{Generative objective on empirical samples}
\label{sec:imp}

\eqref{eq:l2} is the population-level learning objective while empirically we follow the standard loss function (i.e., transformed evidence lower bound of Eq. \ref{eq:l2}) of the classifier-free diffusion guidance~\citep{ho2022classifier} by optimizing:
\small
\begin{align}
\min_\theta~&\gL_2(\theta)~~~~\text{s.t.}~\gL_1'(\theta)\le\xi,~\xi=\inf_{\theta\in\Theta}\gL_1(\theta),
\label{eq:obj_l2}
\end{align}
\normalsize
where we have: $\gL_1(\theta) = \mathbb E_{\rvx\sim p_{\gD}(\rvx), t}[\vert\vert\meps_{\theta}(\rvx^{(t)}, t) - \meps\vert\vert^2]$, $\gL_1'(\theta) = \mathbb E_{\rvx\sim p_{\gD_p}(\rvx), t}[\vert\vert\meps_{\theta}(\rvx^{(t)}, t) - \meps\vert\vert^2]$ and $\gL_2(\theta) = \mathbb E_{\rvx, \rvc\sim p_{\gD_p}(\rvx, \rvc), t}[\vert\vert\meps_{\theta}(\rvx^{(t)}, \rvc, t) - \meps\vert\vert^2]$. Also, $t$ is sampled from uniform between 1 and $T$, $T$ is the total diffusion steps, $\meps$ is the standard Gaussian random variable, and $\meps_\theta(\rvx^{(t)}, t)$ and $\meps_\theta(\rvx^{(t)}, \rvc, t)$ are functions we aim to fit at the $t$-th diffusion step. Note that $\meps_\theta(\rvx^{(t)}, t)$ and $\meps_\theta(\rvx^{(t)}, \rvc, t)$ share the same parameters but are just trained at different stages: distribution mastery on unlabeled data and controllable finetuning on labeled data, respectively. The framework of classifier-free diffusion model and the derivation of $\gL_1(\theta)$, $\gL_1'(\theta)$ and $\gL_2(\theta)$ are introduced in Appendix.

As the true data generating distributions $p_{\gD}(\rvx)$, $p_{\gD_p}(\rvx)$ and $p_{\gD_p}(\rvx, \rvc)$ are unknown, we instead optimize the following empirical loss:
\vspace{-2mm}
\small
\begin{align}
\min_\theta~\hat{\gL}_2(\theta)~~~~\text{s.t.}~\hat{\gL}_1'(\theta)\le\hat{\xi},~\hat{\xi}=\inf_{\theta\in\hat{\Theta}}\hat{\gL}_1(\theta),
\label{eq:emp_l2}
\end{align}
\normalsize
where we have $\hat{\gL}_1(\theta) = \mathbb E_{\rvx\sim \hat{p}_{\gD}(\rvx), t}[\vert\vert\meps_{\theta}(\rvx^{(t)}, t) - \meps\vert\vert^2]$, $\hat{\gL}_1'(\theta) = \mathbb E_{\rvx\sim \hat{p}_{\gD_p}(\rvx), t}[\vert\vert\meps_{\theta}(\rvx^{(t)}, t) - \meps\vert\vert^2]$ and $\hat{\gL}_2(\theta) = \mathbb E_{\rvx, \rvc\sim \hat{p}_{\gD_p}(\rvx, \rvc), t}[\vert\vert\meps_{\theta}(\rvx^{(t)}, \rvc, t) - \meps\vert\vert^2]$. $\hat{p}_{\gD}(\rvx)$, $\hat{p}_{\gD_p}(\rvx)$ and $\hat{p}_{\gD_p}(\rvx, \rvc)$ are corresponding empirical data generating distributions. $\hat{\Theta}$ is the localized parameter space where a minimum can be located for $\hat{\gL}_1(\theta)$. The lexicographic optimization of Eq.~\ref{eq:emp_l2} is presented in Algorithm~\ref{alg:learn}, which relies on learning the update direction on model parameter (i.e., $\nabla \hat\gL_2(\theta) + \lambda\nabla\hat{\gL}_1'(\theta)$ in Algorithm~\ref{alg:learn}) to balance the trade-off between the minimization of $\hat{\gL}_2(\theta)$ and $\hat{\gL}_1'(\theta)$ using a dynamic gradient descent~\citep{gong2021bi}:
\small
\begin{align}
    \theta\leftarrow\theta - \omega\cdot(\nabla \hat\gL_2(\theta) + \lambda\nabla\hat{\gL}_1'(\theta)),
\end{align}
\normalsize
where $\omega$ is predefined positive step size, and $\lambda$ is calculated based on whether the constraint is satisfied at the current gradient step:
\small
\begin{align}
    \lambda =& \max(\frac{\phi(\theta) - \nabla\hat\gL_2(\theta)^T\nabla\hat{\gL}_1'(\theta)}{\|\nabla\hat{\gL}_1'(\theta)\|^2}, 0)\nonumber \\
    \phi(\theta) =&\min(\alpha(\hat{\gL}_1'(\theta)-\gamma\cdot\hat\xi), \beta\|\nabla\hat{\gL}_1'(\theta)\|^2),
\end{align}
\normalsize
where $\alpha$, $\beta$ and $\gamma$ are predefined positive hyperparameters. More details and intuition of lexicographic optimization are explained in Appendix.

\begin{algorithm}
\scriptsize
\DontPrintSemicolon
\KwInput{$\hat{\xi}=\inf_{\theta\in\hat{\Theta}}\hat{\gL}_1(\theta)$ by pretraining on $\gD$; Total diffusion steps $T$; Scheduled forward variance $\{\beta_t\}_{t=1}^T$; $\{\alpha_t=1-\beta_t\}_{t=1}^T$; Predefined positive hyperparameters $\alpha$, $\beta$, $\gamma$ and $\omega$; probability of unconditional training $p_{uncond}$} 
\While{Not converged}
   {
   ~~~Sample $\rvx, \rvc\sim \hat{p}_{\gD_p}(\rvx, \rvc)$, $\meps\sim\gN(\bm 0, \bm I)$, $t\sim U(1, T)$\\
   Compute $\bar\alpha_t=\prod_{s=1}^t\alpha_t$\\
   Diffuse $\rvx^{(t)}=\sqrt{\bar\alpha_t}\rvx + \sqrt{1-\bar\alpha_t}\meps$\\
   Replace $\rvc$ with $\emptyset$ with probability $p_{uncond}$\\
   Compute $\hat{\gL_2}(\theta)=\|\meps_{\theta}(\rvx^{(t)}, t)-\meps\|^2$, $\hat\gL_1'(\theta)=\|\meps_{\theta}(\rvx^{(t)}, \rvc, t)-\meps\|^2$\\
   Compute $\phi(\theta) = \min(\alpha(\hat{\gL}_1'(\theta)-\gamma\cdot\hat\xi), \beta\|\nabla\hat{\gL}_1'(\theta)\|^2)$\\
   Compute $\lambda = \max(\frac{\phi(\theta) - \nabla\hat\gL_2(\theta)^T\nabla\hat{\gL}_1'(\theta)}{\|\nabla\hat{\gL}_1'(\theta)\|^2}, 0)$\\
   Update $\theta$ by $\theta - \omega\cdot(\nabla \hat\gL_2(\theta) + \lambda\nabla\hat{\gL}_1'(\theta))$
   }
\caption{Lexicographic optimization on Eq.~\ref{eq:l2}}
\label{alg:learn}
\end{algorithm}
\vspace{-3mm}
Furthermore, the lexicographic optimization-based constraint $\hat{\xi}=\inf_{\theta\in\hat{\Theta}}\hat{\gL}_1(\theta)$ in Eq. \ref{eq:emp_l2} may be overly strict and could require relaxation to ease the training process. While we anticipate that the parameters derived from Eq. \ref{eq:emp_l2} should be close to those from Eq. \ref{eq:obj_l2}, they do not necessarily have to be an exact subset of the parameters from Eq. \ref{eq:obj_l2}. 
\subsection{Generalization bound of learning constraint}
\label{sec:bound}
In this section, we deduce a confidence bound for the constraint to demonstrate that the optimal set of minimizing the empirical loss within the derived confidence bound encapsulates the true one, and guide the further relaxation on the constraint if needed. First, we define sub-Gaussian random variable in Definition~\ref{def:subgaussian} followed by the deducted confidence bound in Theorem~\ref{thm:bound}. 
% \vspace{-2mm}
\begin{definition}[Sub-Gaussian random variable]
    The random variable $X$ with mean 0 is sub-Gaussian with variance $\sigma^2$ if $\forall s\in\mathbb R$, $\mathbb E_X[\exp (sX)]\le \exp(\frac{\sigma^2 s^2}{2})$.
    \label{def:subgaussian}
\end{definition}
% \vspace{-2mm}
Based on the fact that $\rvx^{(t)}$ is diffused from the random variable $\rvx$ and the standard Gaussian noise $\meps$ (i.e., Algorithm~\ref{alg:learn}), therefore, $\meps_\theta(\rvx^{(t)}, t)$ and $\meps_\theta(\rvx^{(t)}, \rvc^{(t)}, t)$ are also random variables. Meanwhile, minimizing the loss in \eqref{eq:obj_l2} pushes $\meps_\theta(\rvx^{(t)}, t)$ and $\meps_\theta(\rvx^{(t)}, \rvc^{(t)}, t)$ towards $\meps$, which has the mean of 0. Then we introduce the following theorem 1 while assuming $\meps_\theta(\rvx^{(t)}, t)$ and $\meps_\theta(\rvx^{(t)}, \rvc^{(t)}, t)$ are sub-Gaussian random variables.
% \vspace{-2mm}
\begin{theorem}
For every $\theta$ and $t$, assume $\meps_\theta(\rvx^{(t)}, t)$ and $\meps_\theta(\rvx^{(t)}, \rvc_i, t)$ are sub-Gaussian random variables with mean 0 and variance $\sigma^2$, and $\Theta$ is finite. Let $\Theta^*=\{\theta:\gL_1'(\theta)\le\xi\}$, $\hat{\Theta}^*=\{\theta:\hat{\gL}_1'(\theta)\le\hat{\xi} + \epsilon\}$ where $\epsilon$ is the confidence bound with the probability of $1-\delta$. Let $\theta^*$ be the solution to Eq. \ref{eq:obj_l2} and $\hat{\theta}^*$ be the solution to empirical Eq. \ref{eq:emp_l2}, then we have the following:
% \yub{Let $\hat{\theta}$ denote the solution to empirical problem XXX.}
\begin{enumerate}[leftmargin=2em]
  \item $\Theta^*\subseteq\hat{\Theta}^*$: the set of $\theta$ by optimizing Eq. \ref{eq:emp_l2} within the confidence bound covers the true one.
  \item $\gL_2(\hat{\theta}^*)\le \gL_2(\theta^*) + 2\epsilon_{N_p}$: $\theta^*$ and $\hat{\theta}^*$ compete well on $\gL_2(\theta)$.
  \item $\gL_1'(\hat{\theta}^*)\le\xi + 2\epsilon_{N_p} + 2\epsilon_N$: $\hat{\theta}^*$ does not violate constraint of Eq. \ref{eq:obj_l2} too much on $\gL_1'(\theta)$.
  % ~\yub{first $\epsilon$ should be $\xi$?}
  \label{thm:bound}
\end{enumerate}
\vspace{-2mm}
where $\epsilon=\epsilon_N + \epsilon_{N_p}$, $\epsilon_N = \sqrt{C\tilde{\sigma}^2}\cdot\sqrt{\frac{\log\vert\Theta\vert+\log\frac{2}{\delta}}{N}}\vee C\tilde{\sigma}^2\cdot\frac{\log\vert\Theta\vert+\log\frac{2}{\delta}}{N}$, $\epsilon_{N_p} = \sqrt{C\tilde{\sigma}^2}\cdot\sqrt{\frac{\log\vert\Theta\vert+\log\frac{2}{\delta}}{N_p}}\vee C\tilde{\sigma}^2\cdot\frac{\log\vert\Theta\vert+\log\frac{2}{\delta}}{N_p}$, $\tilde{\sigma}^2=\sigma^2+1$ and $C=8\sqrt{2}$.

% \yub{Give $\sigma^2+1$ a new name, say $\widetilde{\sigma}^2$, since they always appear together.}
% \yub{Give an absolute constant $C$ for all absolute constants in $\epsilon_N$ and $\epsilon_{N_p}$, then say $C\le 8\sqrt{2}$ or something like that.}

\label{thm:bound}
\end{theorem}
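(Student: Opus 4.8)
The plan is to reduce all three claims to one uniform deviation event and then finish by elementary algebra. Let $\gE$ denote the event that, simultaneously for every $\theta\in\Theta$,
\[
|\hat\gL_1(\theta)-\gL_1(\theta)|\le\epsilon_N,\qquad
|\hat\gL_1'(\theta)-\gL_1'(\theta)|\le\epsilon_{N_p},\qquad
|\hat\gL_2(\theta)-\gL_2(\theta)|\le\epsilon_{N_p}.
\]
I would first establish $\Pr[\gE]\ge 1-\delta$, and then derive parts 1--3 by working deterministically on $\gE$.

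\textbf{Step 1: per-parameter concentration.} Fix $\theta$ and a diffusion step $t$. By assumption $\meps_\theta(\rvx^{(t)},t)$ (resp.\ $\meps_\theta(\rvx^{(t)},\rvc_i,t)$) is mean-zero sub-Gaussian with variance $\sigma^2$, and $\meps$ is standard Gaussian, so the residual $\meps_\theta(\rvx^{(t)},t)-\meps$ is mean-zero sub-Gaussian with proxy $\tilde\sigma^2=\sigma^2+1$. The square of a sub-Gaussian variable is sub-exponential, so $\|\meps_\theta(\rvx^{(t)},t)-\meps\|^2$ is sub-exponential with scale controlled by $\tilde\sigma^2$. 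Averaging the i.i.d.\ summands (over the $N$ draws of $\gD$, the $N_p$ draws of $\gD_p$, and over $t\sim U(1,T)$) and applying Bernstein's inequality gives, for a single loss averaged over $n$ samples,
\[
\Pr\big[\,|\hat\gL(\theta)-\gL(\theta)|>u\,\big]\;\le\;2\exp\!\Big(-c\,n\min\{u^2/\tilde\sigma^{4},\;u/\tilde\sigma^{2}\}\Big).
\]
Setting the right-hand side equal to the per-event failure level and inverting for $u$ yields the two-regime expression $\epsilon_n=\sqrt{C\tilde\sigma^2}\sqrt{(\log|\Theta|+\log\tfrac2\delta)/n}\vee C\tilde\sigma^2(\log|\Theta|+\log\tfrac2\delta)/n$ with $C=8\sqrt2$, the square-root branch being active in the small-deviation regime and the linear branch in the large-deviation regime.

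\textbf{Step 2: union bound.} Since $\Theta$ is finite, a union bound over its $|\Theta|$ elements, over the two tails, and over the three losses (allocating the confidence budget so that the $\log(2/\delta)$ term absorbs the logarithmic overhead) gives $\Pr[\gE]\ge 1-\delta$. On $\gE$, taking the infimum over $\theta$ in the $\gL_1$ inequality (and using that $\hat\Theta$ and $\Theta$ are the same localized region) yields $|\hat\xi-\xi|\le\epsilon_N$. All of the following is on $\gE$.

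\textbf{Step 3: the three conclusions.} \emph{Part 1:} if $\theta\in\Theta^*$ then $\gL_1'(\theta)\le\xi$, hence $\hat\gL_1'(\theta)\le\gL_1'(\theta)+\epsilon_{N_p}\le\xi+\epsilon_{N_p}\le\hat\xi+\epsilon_N+\epsilon_{N_p}=\hat\xi+\epsilon$, so $\theta\in\hat\Theta^*$ and $\Theta^*\subseteq\hat\Theta^*$. \emph{Part 2:} $\theta^*\in\Theta^*\subseteq\hat\Theta^*$ and $\hat\theta^*$ minimizes $\hat\gL_2$ over $\hat\Theta^*$, so $\gL_2(\hat\theta^*)\le\hat\gL_2(\hat\theta^*)+\epsilon_{N_p}\le\hat\gL_2(\theta^*)+\epsilon_{N_p}\le\gL_2(\theta^*)+2\epsilon_{N_p}$. \emph{Part 3:} $\hat\theta^*\in\hat\Theta^*$ gives $\hat\gL_1'(\hat\theta^*)\le\hat\xi+\epsilon$, so $\gL_1'(\hat\theta^*)\le\hat\gL_1'(\hat\theta^*)+\epsilon_{N_p}\le\hat\xi+\epsilon+\epsilon_{N_p}\le\xi+\epsilon_N+\epsilon_{N_p}+\epsilon_N+\epsilon_{N_p}=\xi+2\epsilon_{N_p}+2\epsilon_N$, using $\hat\xi\le\xi+\epsilon_N$ and $\epsilon=\epsilon_N+\epsilon_{N_p}$.

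\textbf{Expected main obstacle.} The delicate part is Step 1: justifying the sub-Gaussian-of-a-residual reduction, passing to the sub-exponential square, and inverting Bernstein's inequality so the constants match the stated $\tilde\sigma^2=\sigma^2+1$ and $C=8\sqrt2$ exactly (and, if $\meps$ and $\meps_\theta$ are vector-valued rather than scalar, correctly folding the ambient dimension into the squared-norm concentration). By comparison, the union-bound bookkeeping over three losses with a single confidence parameter is routine, and Step 3 is purely algebraic.
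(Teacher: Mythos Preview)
Your proposal is correct and mirrors the paper's own proof almost exactly: the paper likewise passes from the sub-Gaussian residual $\meps_\theta-\meps$ (with proxy $\tilde\sigma^2=\sigma^2+1$) to a sub-exponential via the squaring lemma with parameters $(4\sqrt{2}\tilde\sigma^2,4\tilde\sigma^2)$, applies Bernstein's inequality plus a union bound over the finite $\Theta$ to get the three uniform-deviation inequalities, and then derives Parts~1--3 through the identical inequality chains you wrote out in Step~3. Your caveat in the ``expected main obstacle'' is well placed---the paper's proof treats the residual as if scalar and does not address the vector-valued case explicitly, and its union-bound bookkeeping across the three losses is less careful than yours.
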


% \yub{Other than 2, the thm statement looks correct to me. After changing the correctness, you can spend some time to polish the notation (say, $\epsilon_{N_p}$ looks a bit weird).}

% \yub{we are not doing the continuous $\Theta$ version? (with covering techniques)? sounds fine to me though.}

Theorem~\ref{thm:bound} can be proved starting from the Bernstein’s inequality and the union bound inequality on squared zero-mean sub-Gaussian random variable. More detailed proof is in Appendix. As a large amount of unlabeled data (i.e., $N$) is usually easy to obtain by either manual collection or simulation, $\epsilon_N$ is not large after taking logarithm on the number of model parameters (i.e., $\vert\Theta\vert$), even though it is usually much larger than $N$. Additionally, $\log\vert\Theta\vert$ is not significantly larger than $N_p$ so that $\epsilon_{N_p}$ should not be large as well. For instance, in our experiments, around 45,000 samples and 14 million model parameters for motion generation result in rather small $\epsilon_N$ and $\epsilon_{N_p}$. In practice, we use $\xi=\rho\cdot\inf_{\theta\in\Theta}\gL_1(\theta)$ in \eqref{eq:obj_l2} (i.e., $\hat{\xi}=\rho\cdot\inf_{\theta\in\hat{\Theta}}\hat{\gL}_1(\theta)$ in Eq. \ref{eq:emp_l2}) to relax the constraint, where $\rho$ is an hyperparameter to keep the constraint within the confidence interval.

\section{Experiments}
\label{sec:exp}
% In this section, we firstly introduce datasets and their text annotation in Section~\ref{sec:data}. We then introduce the baseline comparisons in Section~\ref{sec:baseline}, followed by evaluation metrics in Section~\ref{sec:eval}. Then we present and discuss the evaluating results of generation quality in Section~\ref{sec:gen}, followed by the results for evaluating controllability in Section~\ref{sec:cont}. 
% \ning{Remove this paragraph if running out of space.}

\subsection{Datasets}
\label{sec:data}
We employ datasets from three modalities that may suffer from low-resource scenario. Details are in Appendix.

\textbf{Molecules}. We extract 130,831 molecules from QM9 dataset~\citep{ramakrishnan2014quantum} with six molecular properties: polarizability ($\alpha$), highest occupied molecular orbital energy ($\epsilon_{\text{HOMO}}$), lowest unoccupied molecular orbital energy ($\epsilon_{\text{LUMO}}$), the energy difference between HOMO and LUMO ($\Delta_{\epsilon}$), dipole moment ($\mu$) and heat capacity at 298.15K ($C_v$). 

\textbf{Motions}. We employ HumanML3D that contains textually re-annotating motions captured from AMASS~\citep{mahmood2019amass} and HumanAct12~\citep{guo2020action2motion}. It contains 14,616 motions annotated by 44,970 textual descriptions.

\textbf{Time Series}. We assemble 24 stocks from Yahoo Finance from their IPO date to July 8, 2023. We tailor data to the length of 120 by slicing on the opening price for every 120 days, and scale by min-max normalization following~\citet{yoon2019time}. Totally, 210,964 time series are produced. We extract their features including frequency, skewness, mean, variance, linearity (i.e., $R^2$), and the number of peaks via ``tsfresh" in Python. 

\subsection{Baseline models}
\label{sec:baseline}
We compare Text2Data with a representative classifier-free diffusion model in each modality as the baselines. 

\textbf{E(3) Equivariant Diffusion Model (EDM)}~\citep{hoogeboom2022equivariant}.To handle Molecule dataset, we employ EDM as the baseline. EDM utilizes an equivariant network to denoise diffusion processes by concurrently processing both continuous (i.e., atom coordinates) and categorical data (i.e., atom types). The controllability on molecular properties is realized by the classifier-free diffusion guidance conditioned on the embedding of text descriptions, which is encoded by a pretrained T5~\citep{2020t5} encoder (i.e., t5-large).

\textbf{Motion Diffusion Model (MDM)}~\citep{tevet2023human}. We use MDM, a classifier-free diffusion model, for text-to-human motion generation. The text descriptions are embedded by a pretrained T5 encoder to guide the motion generation, providing the mechanism for controllability. 

\textbf{Generation diffusion for time series (DiffTS)}. To generate time series, we design the classifier-free diffusion model conditioned on text embeddings encoded from pretrained T5 encoder. We employ the backbone of \cite{ho2022classifier} by substituting image data to one-dimensional time series, and replacing the U-Net with one-dimensional convolutional neural network.

For implementation, we train baselines on specific proportions of labeled data. Text2Data is modified from a pretraining+finetuning strategy following Eq. \ref{eq:emp_l2}. Additionally, we conduct ablation study where each baseline is still finetuned but without the constraint in Eq. \ref{eq:emp_l2} as a transfer learning-based baseline, or directly applied to augmented text-data pairs. Besides, we adapt from \cite{you2024diffusion} to form a semi-supervised framework, by training a classifier to predict molecular properties and generate pseudo labels for unlabelled molecules. More details are in Appendix.

\begin{figure*}[hbt!]
\centering
\begin{subfigure}[c]{.32\linewidth}
\includegraphics[width=\linewidth]{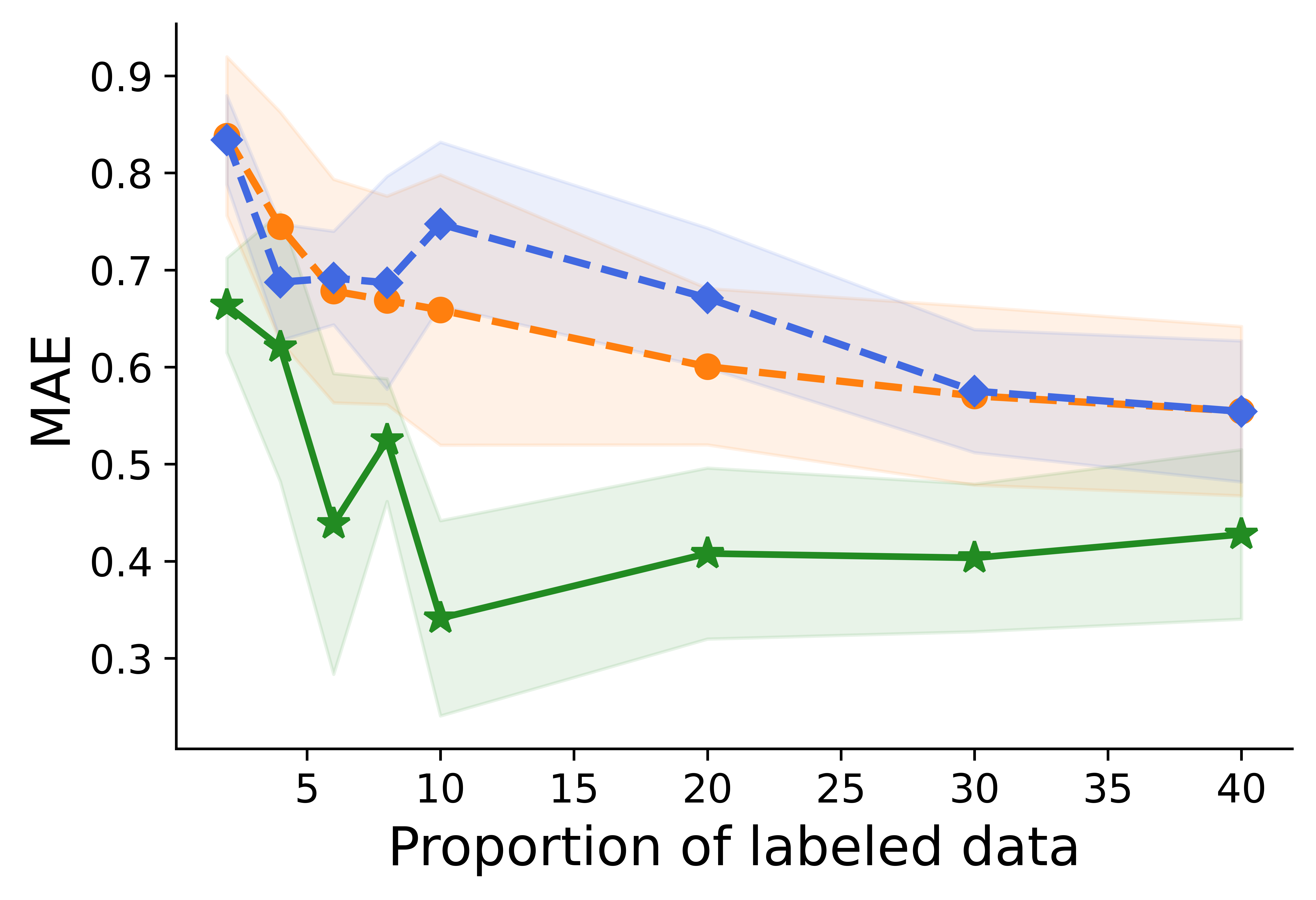}  
  \caption{$\alpha$ of generated molecules}
\end{subfigure}
\begin{subfigure}[c]{.32\linewidth}
  \includegraphics[width=\linewidth]{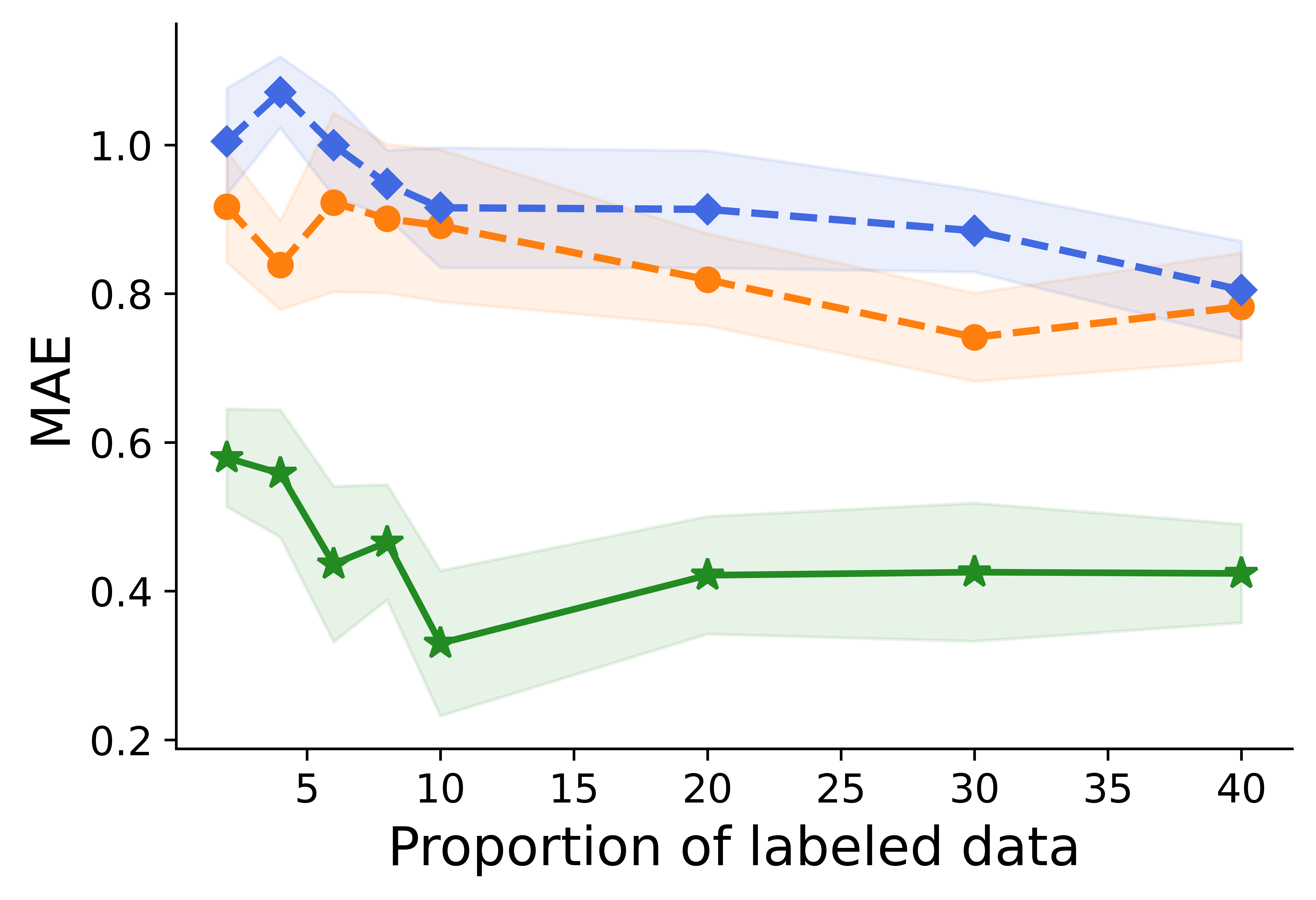}  
  \caption{$\epsilon_{HOMO}$ of generated molecules}
\end{subfigure}
\begin{subfigure}[c]{.32\linewidth}
  \includegraphics[width=\linewidth]{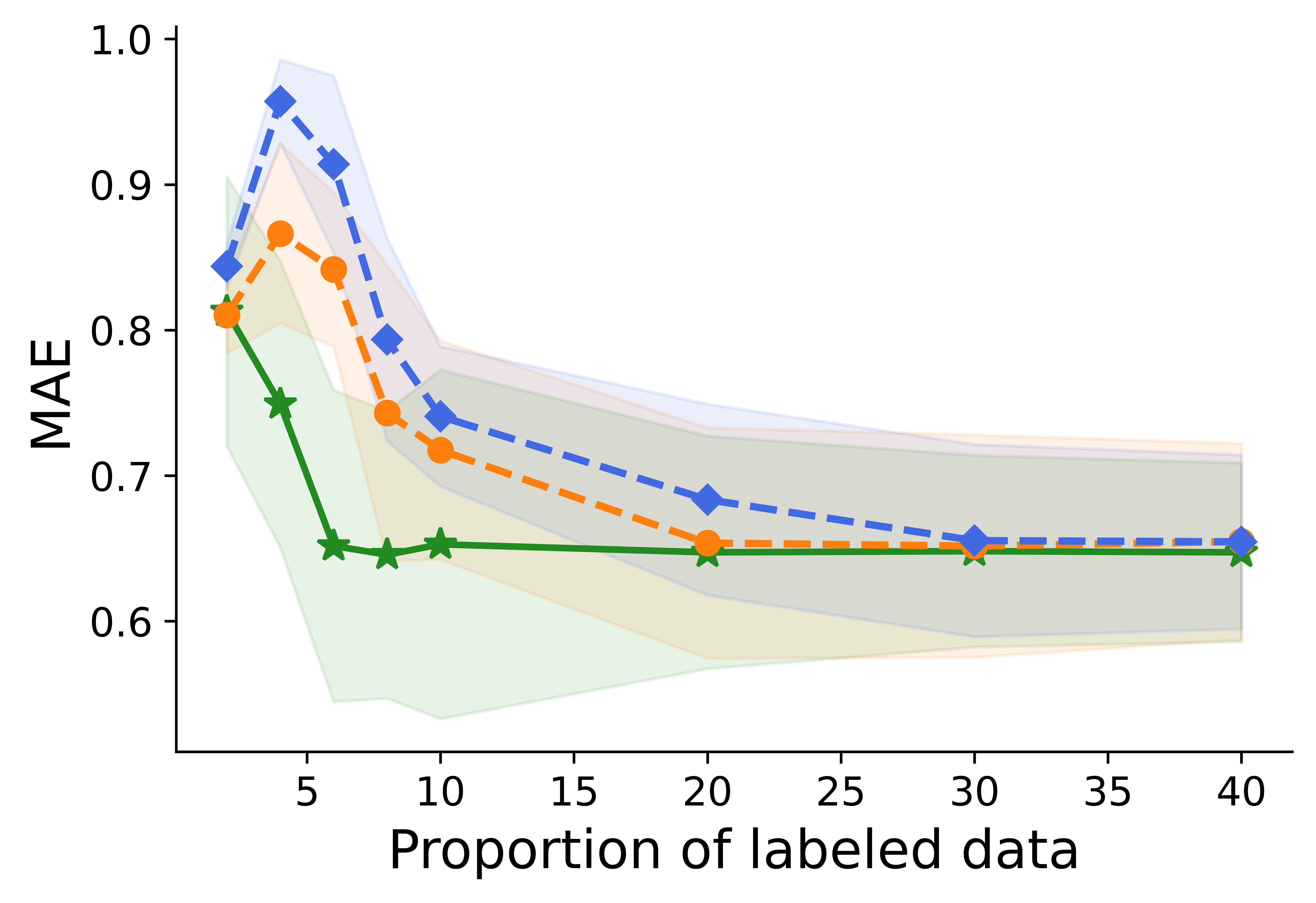}  
  \caption{$\epsilon_{LUMO}$ of generated molecules}
\end{subfigure}
\newline
\begin{subfigure}[c]{.32\linewidth}
  \includegraphics[width=\linewidth]{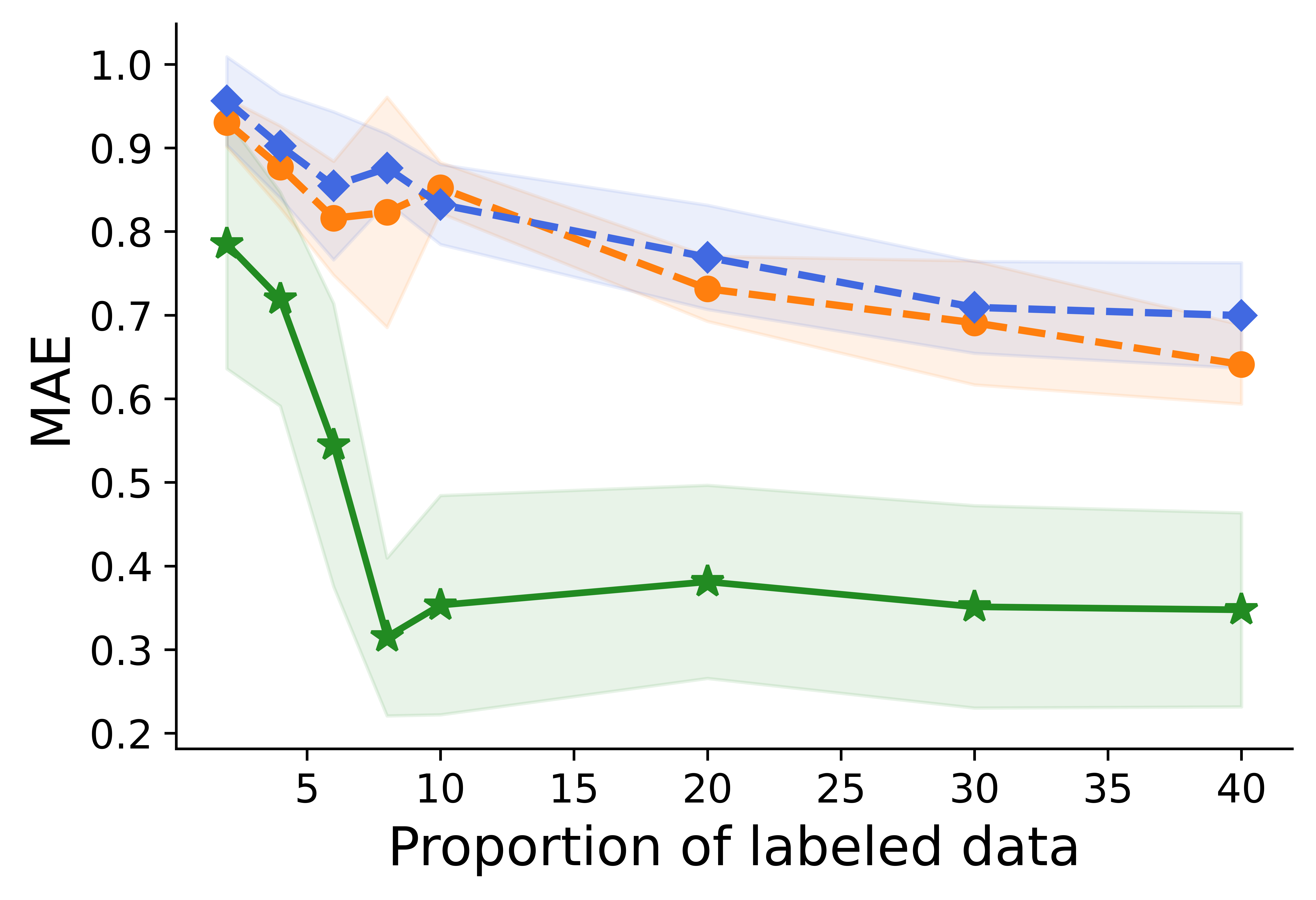}  
  \caption{$\Delta_{\epsilon}$ of generated molecules}
\end{subfigure}
\begin{subfigure}[c]{.32\linewidth}
  \includegraphics[width=\linewidth]{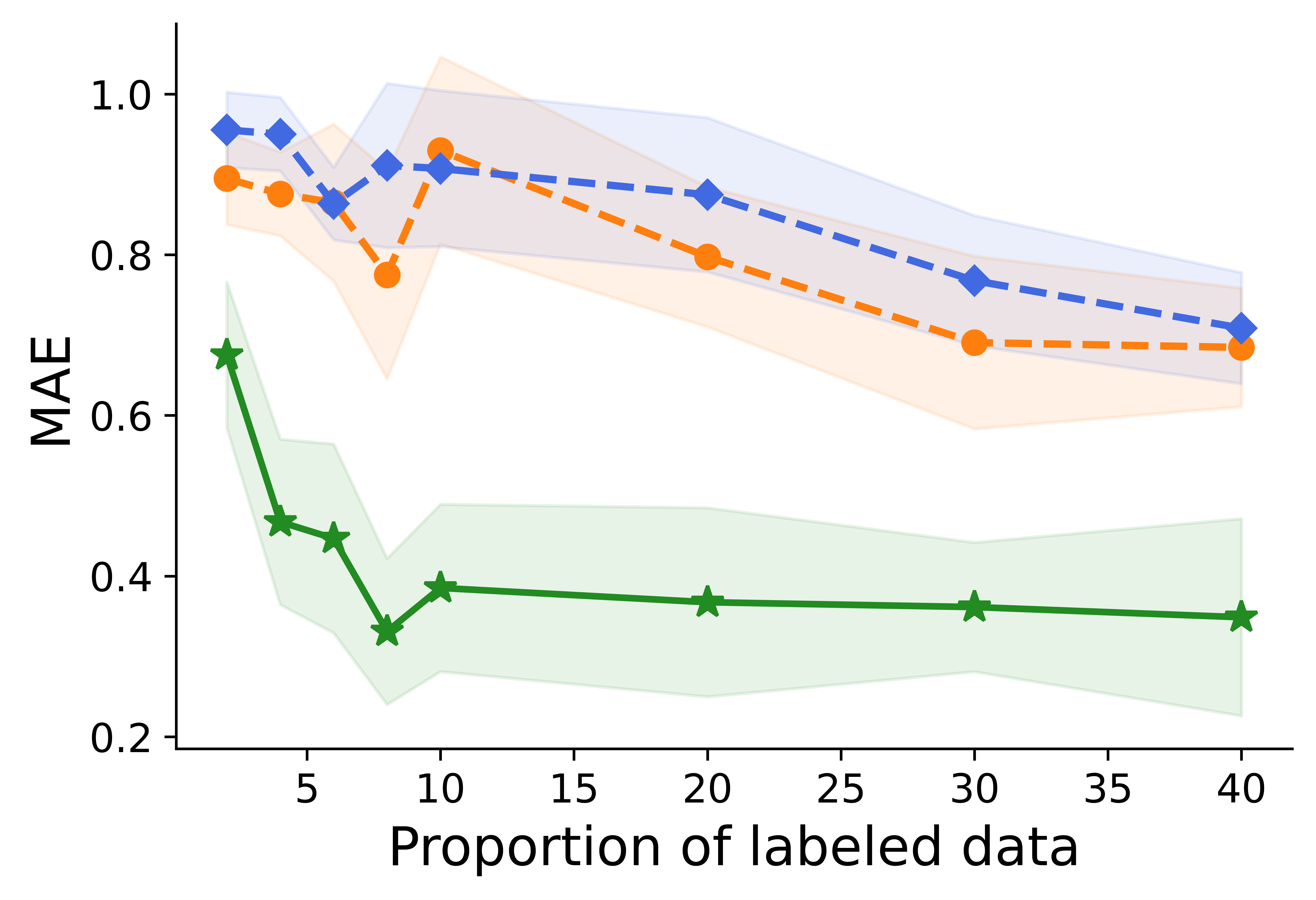}  
  \caption{$\mu$ of generated molecules}
\end{subfigure}
\begin{subfigure}[c]{.32\linewidth}
\includegraphics[width=\linewidth]{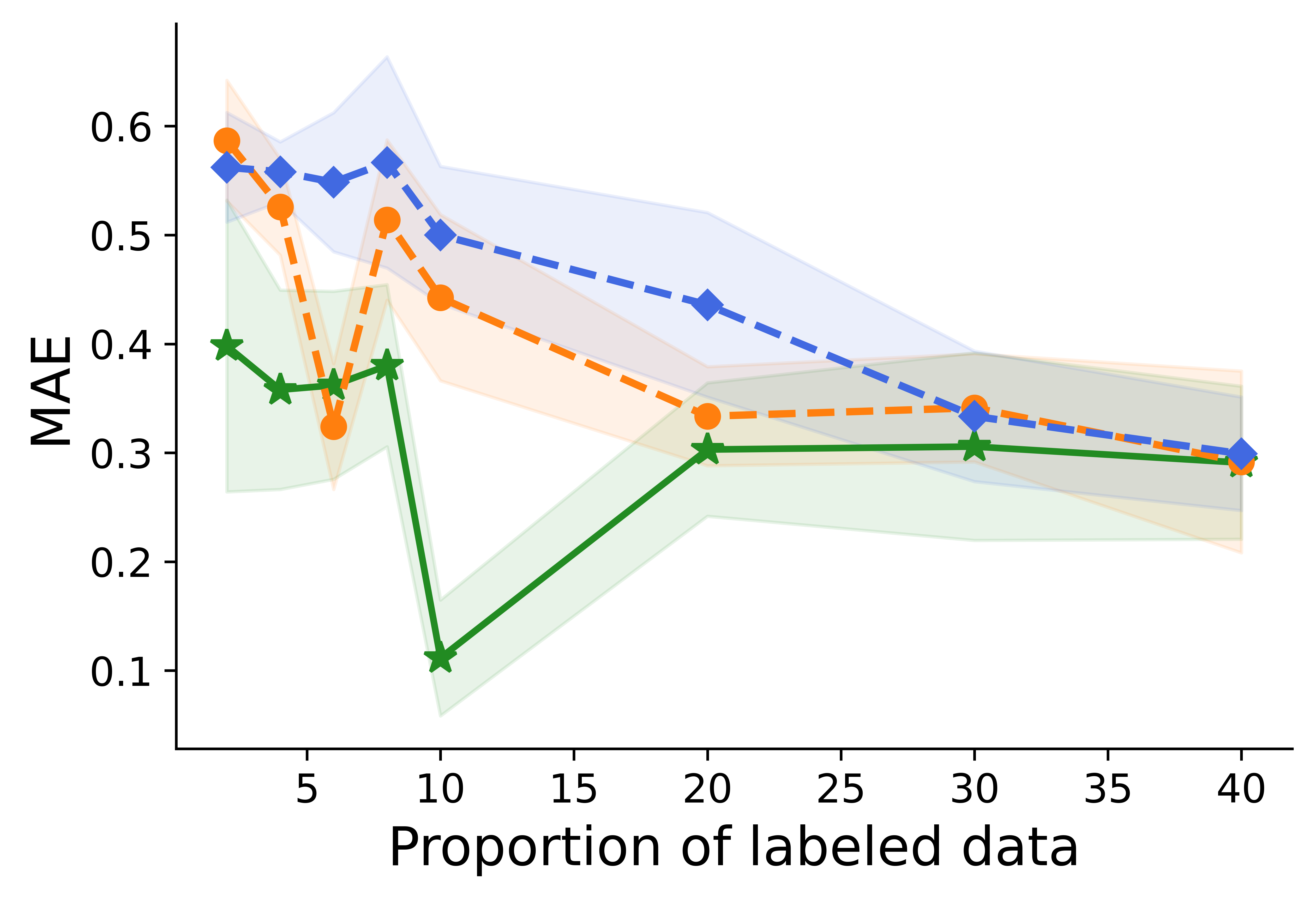}  
  \caption{$C_v$ of generated molecules}
\end{subfigure}
\caption{Evaluate controllability on Molecule dataset according to different proportions of paired data. Green solid line corresponds to Text2Data and two dashed lines are baseline comparisons, in which blue line is EDM and orange line is EDM-finetune. Properties of generated molecules are predicted by classifier $\phi_c$. MAE is computed between properties of generated molecules and intended properties. Lower MAE indicates better performance.}
\label{fig:mol_cond}
% \vspace{-5mm}
\end{figure*}

\begin{figure*}[hbt!]
\begin{center}
\includegraphics[width=0.85\textwidth]{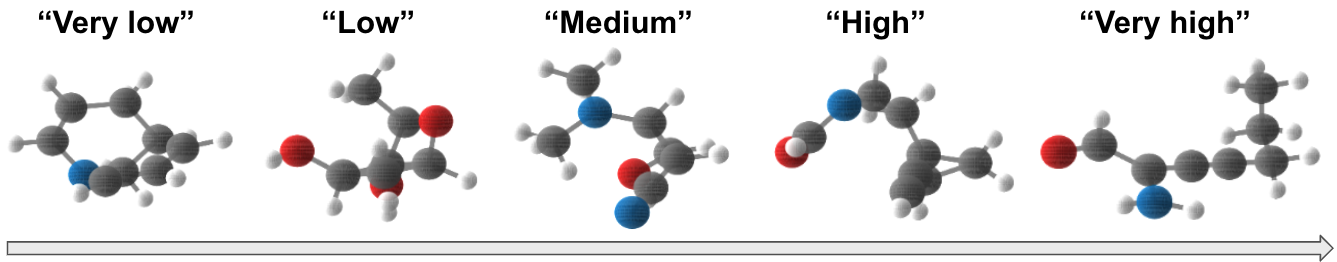}
\caption{Visualization of generated molecules when the polarizability increases from ``very low" to ``very high".}
\label{fig:mol_polar}
\end{center}
% \vspace{-3mm}
\end{figure*}

\subsection{Evaluation metrics}
\label{sec:eval}
We follow different strategies to evaluate both the controllability and the generation quality of the propose approach compared to other baselines.

\textbf{Controllability}. We compare the controllability between Text2Data and the baselines by the similarity between generated data and the ground truth. To assess the generated molecules, we follow \cite{hoogeboom2022equivariant} and train a classifier for each property to extract specific properties from generated data. Then, we calculate the \textit{Mean Absolute Error} (MAE) between the extracted property and the ground truth. To assess the controllability of motion generation, we compute \textit{R precision} and \textit{Multimodal distance} that measure the relevancy of the generated motions to the input prompts~\citep{guo2022generating}. To evaluate the controllability of time series generation, we extract properties via ”tsfresh” and compute the MAE between properties of generated data and that of ground truth. Additionally, we also visualize generated data according to the specified properties.

\textbf{Generation quality}. Generation quality varies based on the modality. For molecular generation, we compute \textit{$-\log$ likelihood} ($-\log p$)~\citep{hoogeboom2022equivariant} and \textit{validity} of generated molecules~\citep{jin2018junction}, \textit{molecular stability} and \textit{atom stability}~\citep{garcia2021n} to evaluate their overall generation quality. For motion generation, we use \textit{FID score} and \textit{Diversity} following~\citep{tevet2023human}. For time series generation, we follow \cite{yoon2019time} and \cite{lim2023regular} by drawing t-SNE plots to visualize the overlap between generated data and the ground-truth. Better model tends to have a larger overlap, indicating more similar distribution. 
% \ning{So no quantitative metric for time series?}
\begin{table*}[hbt!]
 \caption{Evaluate controllability on HumanML3D dataset by R Precision and Multimodal Distance according to different proportions of paired data.}
\centering
\begin{adjustbox}{width=0.95\textwidth,center}
\tiny
\begin{tabular}{c c ccc c ccc} 
\hline
\multirow{2}{*}{Proportion (\%)} && \multicolumn{3}{c}{R Precision $\uparrow$} &&
\multicolumn{3}{c}{Multimodal Dist. $\downarrow$} \\\cline{3-5} \cline{7-9} 
&& Text2Data & MDM-finetune & MDM && Text2Data & MDM-finetune & MDM  \\ \hline
2 &&0.34$\pm$0.01 &\textbf{0.37$\pm$0.01} &0.31$\pm$0.01 &&6.48$\pm$0.06 &\textbf{6.19$\pm$0.05} &6.67$\pm$0.02 \\
4 && 0.39$\pm$0.01&\textbf{0.42$\pm$0.01} &0.38$\pm$0.01 &&5.99$\pm$0.05 &\textbf{5.83$\pm$0.04} &6.01$\pm$0.04 \\
6 && 0.43$\pm$0.01& \textbf{0.43$\pm$0.01}&0.40$\pm$0.02 && 5.85$\pm$0.06&\textbf{5.78$\pm$0.05} &6.01$\pm$0.06 \\
8 && \textbf{0.44$\pm$0.01}& 0.43$\pm$0.01&0.42$\pm$0.01 && \textbf{5.65$\pm$0.04}&5.75$\pm$0.05 & 5.90$\pm$0.04\\
10 &&\textbf{0.45$\pm$0.01} &0.45$\pm$0.01 &0.44$\pm$0.01 &&\textbf{5.74$\pm$0.07} &5.76$\pm$0.07 &5.84$\pm$0.06\\
20 && \textbf{0.47$\pm$0.01}& 0.47$\pm$0.01& 0.45$\pm$0.01&& \textbf{5.61$\pm$0.04}& 5.68$\pm$0.11&5.73$\pm$0.14 \\
30 &&\textbf{0.48$\pm$0.01} &0.47$\pm$0.01 & 0.45$\pm$0.11 &&\textbf{5.61$\pm$0.05} &5.66$\pm$0.06 &5.80$\pm$0.09\\
40 && \textbf{0.49$\pm$0.01}& 0.46$\pm$0.01& 0.45$\pm$0.01&& \textbf{5.61$\pm$0.05}& 5.63$\pm$0.09& 5.90$\pm$0.04\\
\hline
\end{tabular}
\end{adjustbox}
\label{tab:mt_cond}
\vspace{-3mm}
\end{table*}
\normalsize

\begin{table*}[hbt!]
 \caption{Evaluate controllability on time series by MAE on testing set, according to different proportions of paired data. Lower MAE indicates better performance.}
\centering
\HUGE
\begin{adjustbox}{width=0.95\textwidth,center}
\begin{tabular}{c c ccc c ccc c ccc} 
\hline
\multirow{2}{*}{Proportion (\%)} && \multicolumn{3}{c}{Frequency ($\times 10^{-1}$)} && \multicolumn{3}{c}{Skewness} &&
\multicolumn{3}{c}{Mean ($\times 10^{-2}$)} \\\cline{3-5} \cline{7-9} \cline{11-13} 
&& Text2Data & DiffTS-finetune & DiffTS && Text2Data & DiffTS-finetune & DiffTS && Text2Data & DiffTS-finetune & DiffTS\\ \hline
2 &&\textbf{2.59$\pm$0.20} &2.62$\pm$0.20 &2.60$\pm$0.17 &&\textbf{1.68$\pm$0.20} &2.34$\pm$0.30 &1.84$\pm$0.22 && \textbf{0.63$\pm$0.39}&0.63$\pm$0.39 &0.79$\pm$0.41 \\
4 &&\textbf{2.55$\pm$0.18} &2.59$\pm$0.19 &2.59$\pm$0.19 &&\textbf{1.63$\pm$0.14} &1.77$\pm$0.29 & 2.80$\pm$0.28 &&\textbf{0.60$\pm$0.40} &0.61$\pm$0.39 & 0.73$\pm$0.40\\ 
6 &&\textbf{2.52$\pm$0.18} &2.57$\pm$0.19 & 2.57$\pm$0.19&&\textbf{1.00$\pm$0.16} &1.85$\pm$0.22 &1.81$\pm$0.21 &&\textbf{0.56$\pm$0.38} &0.58$\pm$0.38 & 0.71$\pm$0.36\\ 
8 &&\textbf{2.54$\pm$0.18} &2.56$\pm$0.19 & 2.57$\pm$0.18&&\textbf{1.10$\pm$0.18} &1.56$\pm$0.18 &1.78$\pm$0.09 &&\textbf{0.57$\pm$0.38} &0.63$\pm$0.40 & 0.62$\pm$0.36\\
10 &&\textbf{2.54$\pm$0.19} &2.57$\pm$0.18 &2.55$\pm$0.20 &&\textbf{0.87$\pm$0.13} &1.20$\pm$0.17 &1.12$\pm$0.11 &&\textbf{0.55$\pm$0.37} &0.57$\pm$0.36 & 0.62$\pm$0.39\\
20 &&\textbf{2.54$\pm$0.18} &2.55$\pm$0.18 &2.58$\pm$0.21 &&\textbf{1.05$\pm$0.15} &1.06$\pm$0.12 &1.26$\pm$0.14 &&\textbf{0.55$\pm$0.40} &0.57$\pm$0.37 & 0.65$\pm$0.38\\
30 &&\textbf{2.53$\pm$0.18} &2.56$\pm$0.17 &2.56$\pm$0.18 &&\textbf{1.03$\pm$0.12} &1.16$\pm$0.25 &1.71$\pm$0.23 &&\textbf{0.51$\pm$0.33} &0.53$\pm$0.33 & 0.59$\pm$0.34\\ 
40 &&\textbf{2.53$\pm$0.18} &2.55$\pm$0.18 &2.55$\pm$0.19 &&\textbf{1.03$\pm$0.11} &1.15$\pm$0.24 &1.19$\pm$0.18 &&\textbf{0.51$\pm$0.33} &0.57$\pm$0.32 & 0.57$\pm$0.36\\ 
\hline
\end{tabular}
\end{adjustbox}
\label{tab:ts_cond}
\end{table*}
\normalsize
\subsection{Comparisons on controllability}
\label{sec:cont}
Figure~\ref{fig:mol_cond} illustrates the MAE trend between properties of generated molecules and the intended one as the proportion of labeled training data rises. Text2Data achieves superior performance than EDM-finetune and EDM on all properties by a remarkable margin. The results also indicate that certain properties, such as $\epsilon_{LUMO}$ and $Cv$, are more readily controllable. For these properties, the performance of the three models converges as the amount of labeled training data becomes sufficiently large (Figure~\ref{fig:mol_cond}). We further increase the proportion of available labels in the dataset up to $100\%$ and, as indicated in Appendix Table~\ref{tab:mol_prop}, the MAE keeps increasing when more labels are involved during training, and it gradually converges in the end. Appendix Table~\ref{tab:mol_ablation} also suggests that Text2Data surpasses the data augmentation-based method, which may suffer from the potentially poor alignment between text and data and the potential overfitting.

We depict the molecules generated as the text descriptor for polarizability shifts from ``very low" to ``very high" in Figure~\ref{fig:mol_polar}. Polarizability indicates the inclination of the molecule to form an electric dipole moment under an external electric field. As $\alpha$ values rise, we expect to see molecules with less symmetrical forms, as evidenced in Figure~\ref{fig:mol_polar}. This trend suggests the validity of generated molecules by Text2Data and its fine-grained controllability.

As suggested in Table~\ref{tab:mt_cond}, Text2Data also outperforms MDM-finetune and MDM in the controllable generation of motions from text descriptions. While MDM-finetune is slightly better than Text2Data when the proportion of labeled training data is small-owing to milder catastrophic forgetting during finetuning with a smaller sample size-Text2Data consistently surpasses both MDM-finetune and MDM as the volume of labeled training data increases. Specifically, in this situation, Text2Data surpasses MDM-finetune and MDM in R Precision with average margins of 2.31$\%$ and 5.57$\%$, respectively, and in Multimodal Distance with average margins of 0.93$\%$ and 3.30$\%$, respectively. The results also indicate that an increase in labeled training data enhances the performance of controllability, which is expected as more supervision is involved. 

Additionally, we evaluate controllability of Text2Data, along with its baseline comparisons, utilizing MAE to measure the congruence between the property of generated data and the intended one within the Time Series dataset. As indicated in Table~\ref{tab:ts_cond}, Text2Data consistently excels over two baselines, DiffTS-finetune and DiffTS, across all three properties assessed. Results of another three properties are presented in Appendix Table~\ref{tab:ts_cond2}, which suggests the similar conclusion. Specifically, Text2Data and DiffTS-finetune both show a marked improvement over DiffTS in controlling frequency, variance, and skewness. They also exhibit a slight edge in controlling mean, number of peaks, and linearity. The enhanced performance of Text2Data correlates with its proficiency in alleviating the issue of catastrophic forgetting while maintaining a pursuit of controllability.

\subsection{Comparisons on generation quality}
\label{sec:gen}
Text2Data not only demonstrates superior performance in controllable text-to-data generation but also sustains competitive generation quality relative to baseline models.

When generating molecules from Text2Data and its baseline comparisons, as shown in Appendix Table~\ref{tab:mol_gen1}, we compute $-\log p$ and validity to evaluate generation quality. The performance of Text2Data is consistently better. It surpasses EDM-finetune and EDM by average margins of 19.07$\%$ and 58.03$\%$, respectively. It is 1.98$\%$ and 10.59$\%$ better than EDM-finetune and EDM on average, respectively, regarding validity of molecules. Besides, we evaluate Text2Data compared with EDM-finetune and EDM on molecular stability and atom stability (Appendix Table~\ref{tab:mol_gen2}). Text2Data exceeds EDM-finetune and EDM by average margins of 2.34$\%$ and 17.31$\%$, respectively, regarding molecular stability. It is also 0.29$\%$ and 1.21$\%$ better than EDM-finetune and EDM on average, respectively, regarding atom stability. The consistent improvements on all the three models result from our superior performance of Text2Data on properties (e.g., molecular stability) that are hard to control.

As indicated in Appendix Table~\ref{tab:mt_gen}, quantitative assessment of motion generation from text shows that Text2Data surpasses baseline methods in both quality and diversity. Particularly, Text2Data outperforms MDM-finetune and MDM by 2.73$\%$ and 36.05$\%$ on average, respectively, regarding FID. For diversity, Text2Data surpasses MDM-finetune and MDM by average margins of 0.81$\%$ and 3.71$\%$, respectively. Enhanced performance is derived from Text2Data to fully leverage all samples in the dataset, while effectively mitigating catastrophic forgetting during finetuning.

We evaluate the overall quality of generating time series by making t-SNE plots of generated time series against ground truth. Substantial overlap between the generated time series and the ground truth suggests a closer distribution alignment, implying a better performance of Text2Data. As demonstrated in Appendix Figure~\ref{fig:tsne}, specifically, the red pattern represents the t-SNE of the ground-truth time series, whereas the blue pattern represents the t-SNE of the generated time series according to the same text description. Compared with DiffTS-finetune and DiffTS, Text2Data corresponds to the largest overlap between distributions of the generated and the ground-truth time series, suggesting its superior ability to precisely generate data according to the text description. The non-overlapping part may result from the diversity of generated time series or some other properties not labeled in the dataset so that not controlled by text description. The inferior performance of DiffTS stems from its training solely on labeled data, potentially leading to an incomplete understanding of the overall data distribution and a risk of overfitting due to limited size of labeled data. DiffTS may only partially capture the data distribution based on textual descriptions due to its susceptibility to catastrophic forgetting, which also heightens the risk of overfitting. 

\section{Conclusion}
\label{sec:conclusion}
We propose Text2Data to improve quality and property control of text-to-data generation for various modalities in low-resource scenarios using diffusion models. Text2Data uses unlabeled data to capture the prevailing data distribution via unsupervised diffusion model. It is then finetuned on text-labeled data, with a novel constraint optimization-based learning objective to ensure controllability while reducing catastrophic forgetting. Experiments show consistently superior performance to recent baselines. While Text2Data is presented as a diffusion-based framework in this article, it can be seamlessly adapted to other generative models. 
\newpage
\appendix
\onecolumn
\section{Derivation of diffusion model}
\label{sec:diff}
We introduce the framework of the classifier-free diffusion model~\cite{ho2022classifier} that we use, including the forward process, reverse process and its learning objective. Let $\rvx^{0:T}$ be $\{\rvx, \rvx^{(1)}, ..., \rvx^{(T)}\}$. Diffusion models have the form $p_{\theta}(\rvx)=\int p_{\theta}(\rvx, \rvx^{1:T})d\rvx^{0:T}$, where $\rvx^{(1)}, \rvx^{(2)}, ..., \rvx^{(T)}$ are latents of the same dimensionality as the data $\rvx$.
\subsection{Forward process}
First, the data $\rvx$ is diffused up to time $T$ by adding noise. The posterior $q(\rvx^{1:T}\vert \rvx)$ is fixed to a Markov Chain that gradually adds noise to the data according to a variance schedule $\beta_1, \beta_2, ..., \beta_T$:
\begin{align}
    q(\rvx^{(1)}, \rvx^{(2)}, ..., \rvx^{(T)}\vert \rvx)&=\prod_{t=1}^T q(\rvx^{(t)}\vert \rvx^{(t-1)}) \\
    q(\rvx^{(t)}\vert \rvx^{(t-1)})&= \mathcal N(\rvx^{(t)}; \sqrt{1-\beta_t}\rvx_{t-1}, \beta_t\bm I) 
    \label{eq:mk}
\end{align}
Let $\alpha_t=1-\beta_t$ and $\Bar{\alpha}_t=\prod_{s=1}^t\alpha_s$, we can derive:
\begin{align}
    q(\rvx^{(t)}\vert\rvx)=\mathcal N(\rvx^{(t)};\sqrt{\bar{\alpha}_t}\rvx, (1-\Bar{\alpha}_t)\bm I)
    \label{eq:diff}
\end{align}

\subsection{Reverse process}
The diffusion model aims to learn $p_{\theta}(\rvx^{0:T}, \rvc)$ for the paired data s.t. $\rvx, \rvc\in\gD_p$, which is defined as a Markov Chain with learned Gaussian transitions starting from $p(\rvx^{(T)})=\mathcal N(\rvx^{(T)}; \bm 0, \bm I)$.
\begin{align}
    p_{\theta}(\rvx^{0:T}, \rvc)&=p(\rvx^{(T)}, \rvc)\prod_{t=1}^{T}p_{\theta}(\rvx^{(t-1)}\vert \rvx^{(t)}, \rvc)=p(\rvx^{(T)}, \rvc)\prod_{t=1}^{T}p_{\theta}(\rvx^{(t-1)}\vert \rvx^{(t)}, \rvc)\\
    p_{\theta}(\rvx^{(t-1)}\vert\rvx^{(t)}, \rvc)&=\gN(\rvx^{(t-1)};\meps_{\theta}(\rvx^{(t)}, \rvc, t), \Sigma_{\theta}(\rvx^{(t)}, \rvc, t))
\end{align}

\subsection{Derivation of learning objective}
\label{sec:diff_obj}
To maximize the joint likelihood $p_{\theta}(\rvx,\rvc)$, we derive evidence variational lower bound (ELBO) as the learning objective. Specifically, we have:
\begin{align}
    \log p_{\theta}(\rvx, \rvc) &= \log \int p_{\theta}(\rvx, \rvx^{(1)}, ..., \rvx^{(T)}, \rvc) d\rvx^{1:T} \\
    &=\log \int p_{\theta}(\rvx^{0:T}, \rvc)d\rvx^{1:T} \\
    &=\log \int \frac{p_{\theta}(\rvx^{0:T}, \rvc) q(\rvx^{1:T}\vert \rvx)}{q(\rvx^{1:T}\vert \rvx)}d\rvx^{1:T} \\ 
    &=\log\mathbb E_{q(\rvx^{1:T}\vert \rvx)}[\frac{p_{\theta}(\rvx^{0:T}, \rvc)}{q(\rvx^{1:T}\vert \rvx)}] \\
    \label{eq:logp}
\end{align}

Based on Jensen’s inequality, the ELBO of Eq.~\ref{eq:logp} is as below: 
\begin{align}
    \log p_{\theta}(\rvx, \rvc) &= \log\mathbb E_{q(\rvx^{1:T}\vert \rvx)}[\frac{p_{\theta}(\rvx^{0:T}, \rvc)}{q(\rvx^{1:T}\vert \rvx)}]\\
    &\ge\mathbb E_{q(\rvx^{1:T}\vert \rvx)}[\log \frac{p_{\theta}(\rvx^{0:T}, \rvc)}{q(\rvx^{1:T}\vert \rvx)}]\\
    &=\mathbb E_{q(\rvx^{1:T}\vert \rvx)}[\log p_{\theta}(\rvx^{(T)}, \rvc)+\sum_{t\ge 1}\log \frac{p_{\theta}(\rvx^{(t-1)}\vert\rvx^{(t)}, \rvc)}{q(\rvx^{(t)}\vert\rvx^{(t-1)})}]
    \label{eq:vlbo}
\end{align}

Since $p(\rvx^{(T)})=\mathcal N(\rvx^{(T)}; \bm 0, \bm I)$ and not dependent on $\rvc$, as a result, $\log p(\rvx^{(T)}, \rvc)=\log p(\rvx^{(T)}\vert\rvc)p(\rvc) =\log p(\rvx^{(T)})p(\rvc) = \log p(\rvx^{(T)}) + \log p(\rvc)$. Then Eq.~\ref{eq:vlbo} above becomes:
\begin{align}
    \log p_{\theta}(\rvx, \rvc)&\ge\mathbb E_{q(\rvx^{1:T}\vert \rvx)}[\log p_{\theta}(\rvx^{(T)}, \rvc)+\sum_{t\ge 1}\log \frac{p_{\theta}(\rvx^{(t-1)}\vert\rvx^{(t)}, \rvc)}{q(\rvx^{(t)}\vert\rvx^{(t-1)})}]\\
    &=\mathbb E_{q(\rvx^{1:T}\vert \rvx)}[\log p(\rvx^{(T)})+\log p(\rvc)+\sum_{t\ge 1}\log \frac{p_{\theta}(\rvx^{(t-1)}\vert\rvx^{(t)}, \rvc)}{q(\rvx^{(t)}\vert\rvx^{(t-1)})}]
\end{align}

% \hw{Not quite sure if we can get to (8) from (7). Do we want to multiply $p(x_{1:T}|x_0,c)$ instead of $p(x_{1:T}|x_0)$ ?}\\

% \hw{can we provide more details from (8) to (9)? }

Given the posterior formed as the Markov Chain (Eq.~\ref{eq:mk}), we have $q(\rvx_t\vert \rvx_{t-1}) = q(\rvx^{(t)}\vert \rvx^{(t-1)}, \rvx)$. Therefore, we minimize the loss for the paired data of $\{\rvx, \rvc\}\in\gD_p$:
\begin{align}
    \gL_{ELBO}&=-\mathbb E_{q(\rvx^{1:T}\vert \rvx)}[\log p(\rvx^{(T)})+\log p(\rvc)+\sum_{t\ge 1}\log \frac{p_{\theta}(\rvx^{(t-1)}\vert\rvx^{(t)}, \rvc)}{q(\rvx^{(t)}\vert\rvx^{(t-1)})}] \\
    &=\mathbb E_{q(\rvx^{1:T}\vert \rvx)}[-\log p(\rvx^{(T)}) -\log p(\rvc)-\sum_{t> 1}\log \frac{p_{\theta}(\rvx^{(t-1)}\vert\rvx^{(t)}, \rvc)}{q(\rvx^{(t)}\vert\rvx^{(t-1)})} -\log \frac{p_{\theta}(\rvx\vert\rvx^{(1)}, \rvc)}{q(\rvx^{(1)}\vert\rvx)}]\\
    &=\mathbb E_{q(\rvx^{1:T}\vert \rvx)}[-\log p(\rvx^{(T)}) -\log p(\rvc)-\sum_{t> 1}\log \frac{p_{\theta}(\rvx^{(t-1)}\vert\rvx^{(t)}, \rvc)}{q(\rvx^{(t-1)}\vert\rvx^{(t)}, \rvx)}\cdot\frac{q(\rvx^{(t-1)}\vert \rvx)}{q(\rvx^{(t)}\vert \rvx)} -\log \frac{p_{\theta}(\rvx\vert\rvx^{(1)}, \rvc)}{q(\rvx^{(1)}\vert\rvx)}]\\
    &=\mathbb E_{q(\rvx^{1:T}\vert \rvx)}[-\log \frac{p(\rvx^{(T)})}{q(\rvx^{(T)}\vert \rvx)} -\log p(\rvc)-\sum_{t> 1}\log \frac{p_{\theta}(\rvx^{(t-1)}\vert\rvx^{(t)}, \rvc)}{q(\rvx^{(t-1)}\vert\rvx^{(t)}, \rvx)} -\log p_{\theta}(\rvx\vert\rvx^{(1)}, \rvc)] \\
    &=D_{KL}(q(\rvx^{(T)}\vert\rvx)\vert\vert p(\rvx^{(T)}))+\sum_{t>1} D_{KL}(q(\rvx^{(t-1)}\vert \rvx^{(t)}, \rvx)\vert\vert p_{\theta}(\rvx^{(t-1)}\vert \rvx^{(t)}, \rvc)) -\mathbb E_{q(\rvx^{1:T}\vert \rvx)}[\log p_{\theta}(\rvx\vert \rvx^{(1)}, \rvc)]-\log p(\rvc) \\
    &=L_T +\sum_{t>1}L_{t-1} +L_0 - \log p(\rvc) \\
    &\propto L_T +\sum_{t>1}L_{t-1} +L_0,
    \label{eq:l_paired}
\end{align}
where we have: 
\begin{align}
    L_T &= D_{KL}(q(\rvx^{(T)}\vert\rvx)\vert\vert p(\rvx^{(T)})) \\
    L_{t-1} &= D_{KL}(q(\rvx^{(t-1)}\vert \rvx^{(t)}, \rvx)\vert\vert p_{\theta}(\rvx^{(t-1)}\vert \rvx^{(t)}, \rvc))\\
    L_0 &= -\mathbb E_{q(\rvx^{1:T}\vert \rvx)}[\log p_{\theta}(\rvx\vert \rvx^{(1)}, \rvc)]
\end{align}

We can directly derive $q(\rvx^{(t-1)}\vert \rvx^{(t)}, \rvx)$ as:
\begin{align}
    q(\rvx^{(t-1)}\vert \rvx^{(t)}, \rvx) &= \frac{q(\rvx^{(t-1)}, \rvx^{(t)}, \rvx)}{q(\rvx^{(t)}, \rvx)} \\
    &=\frac{q(\rvx^{(t)}\vert \rvx^{(t-1)}, \rvx)q(\rvx^{(t-1)}, \rvx)}{q(\rvx^{(t)}, \rvx)} \\
    &=\frac{q(\rvx^{(t)}\vert\rvx^{(t-1)})q(\rvx^{(t-1)}\vert \rvx)}{q(\rvx^{(t)}\vert \rvx)} \\
    &\propto \exp[-\frac{1}{2}(\frac{(\rvx^{(t)} - \sqrt{\alpha_t}\rvx^{(t-1)2})}{\beta_t}+\frac{(\rvx^{(t-1)}-\sqrt{\Bar{\alpha}_{t-1}}\rvx)^2}{1-\Bar{\alpha}_{t-1}}-\frac{(\rvx^{(t)}-\sqrt{\Bar{\alpha}_t}\rvx)^2}{1-\Bar{\alpha}_t})]\\
    &=\exp[-\frac{1}{2}((\frac{\alpha_t}{\beta_t} + \frac{1}{1-\Bar{\alpha}_{t-1}})\rvx^{(t-1)2}-2(\frac{\sqrt{\alpha_t}}{\beta_t}\rvx^{(t)}+\frac{\sqrt{\Bar{\alpha}_{t-1}}}{1-\Bar{\alpha}_{t-1}}\rvx)\rvx^{(t-1)}+C(\rvx^{(t)}, \rvx))\\
    &=\exp[-\frac{1}{2}(A\rvx^{(t-1)2}-2B\rvx^{(t-1)}+C(\rvx^{(t)}, \rvx))],
    \label{eq:mu}
\end{align}
where $A = \frac{\alpha_t}{\beta_t} + \frac{1}{1-\Bar{\alpha}_{t-1}}$ and $B = \frac{\sqrt{\alpha_t}}{\beta_t}\rvx^{(t)}+\frac{\sqrt{\Bar{\alpha}_{t-1}}}{1-\Bar{\alpha}_{t-1}}\rvx$. Then we have:
\begin{align}
    \Sigma = \frac{1}{A} = \frac{1}{\frac{\alpha_t}{\beta_t} + \frac{1}{1-\Bar{\alpha}_{t-1}}} = \frac{(1-\Bar{\alpha}_{t-1})\beta_t}{(1-\Bar{\alpha}_{t-1})\alpha_t+\beta_t}=\frac{(1-\Bar{\alpha}_{t-1})\beta_t}{\alpha_t + \beta_t-\Bar{\alpha}_{t-1}\alpha_t}=\frac{(1-\Bar{\alpha}_{t-1})\beta_t}{1-\Bar{\alpha}_t}
\end{align}
Also we have:
\begin{align}
    \mu(\rvx^{(t)}, \rvx)&=\frac{B}{A} = B\Sigma\\
    &=(\frac{\sqrt{\alpha_t}}{\beta_t}\rvx^{(t)}+\frac{\sqrt{\Bar{\alpha}_{t-1}}}{1-\Bar{\alpha}_{t-1}}\rvx)\Sigma \\ 
    &=(\frac{\sqrt{\alpha_t}}{\beta_t}\rvx^{(t)}+\frac{\sqrt{\Bar{\alpha}_{t-1}}}{1-\Bar{\alpha}_{t-1}}\rvx)\frac{(1-\Bar{\alpha}_{t-1})\beta_t}{1-\Bar{\alpha}_t}\\
    &=\frac{(1-\Bar{\alpha}_{t-1})\sqrt{\alpha_{t}}}{1-\Bar{\alpha}_t}\rvx^{(t)} + \frac{\sqrt{\Bar{\alpha}_{t-1}}\beta_t}{1-\Bar{\alpha}_t}\rvx
    \label{eq:mu}
\end{align}
Based on Eq.~\ref{eq:diff}, we can reparameterize $\rvx^{(t)}$ as: $\rvx^{(t)}=\sqrt{\Bar{\alpha}_t}\rvx-\sqrt{1-\Bar{\alpha}_t}\meps$ where $\meps\sim \mathcal N(\bm 0, \bm I)$. Then, plug $\rvx = \frac{1}{\sqrt{\Bar{\alpha}_t}}(\rvx^{(t)} - \sqrt{1-\Bar{\alpha}_t}\meps)$ in to Eq.~\ref{eq:mu}, we have:
\begin{align}
\mu(\rvx^{(t)}, \rvx)&=\frac{(1-\Bar{\alpha}_{t-1})\sqrt{\alpha_{t}}}{1-\Bar{\alpha}_t}\rvx^{(t)} + \frac{\sqrt{\Bar{\alpha}_{t-1}}\beta_t}{1-\Bar{\alpha}_t}\frac{1}{\sqrt{\Bar{\alpha}_t}}(\rvx^{(t)} - \sqrt{1-\Bar{\alpha}_t}\meps) \\
&=\frac{(1-\Bar{\alpha}_{t-1})\sqrt{\alpha_{t}}}{1-\Bar{\alpha}_t}\rvx^{(t)} +\frac{\beta_t}{(1-\Bar{\alpha}_t)\sqrt{\alpha_t}}(\rvx^{(t)} - \sqrt{1-\Bar{\alpha}_t}\meps) \\
&=\frac{(1-\Bar{\alpha}_{t-1})\alpha_t+\beta_t}{(1-\Bar{\alpha}_t)\sqrt{\alpha_t}}\rvx^{(t)}-\frac{\beta_t}{\sqrt{1-\Bar{\alpha}_t}\sqrt{\alpha_t}}\meps\\
&=\frac{(\alpha_t+\beta_t)-\Bar{\alpha}_{t-1}\alpha_t}{(1-\Bar{\alpha}_t)\sqrt{\alpha_t}}\rvx^{(t)}-\frac{\beta_t}{\sqrt{1-\Bar{\alpha}_t}\sqrt{\alpha_t}}\meps\\
&=\frac{1-\Bar{\alpha}_t}{(1-\Bar{\alpha}_t)\sqrt{\alpha_t}}\rvx^{(t)}-\frac{\beta_t}{\sqrt{1-\Bar{\alpha}_t}\sqrt{\alpha_t}}\meps\\
&=\frac{1}{\sqrt{\alpha_t}}(\rvx^{(t)}-\frac{\beta_t}{\sqrt{1-\Bar{\alpha}_t}}\meps)
\end{align}

Therefore, $q(\rvx^{(t-1)}\vert \rvx^{(t)}, \rvx)\sim \mathcal N(\rvx^{(t-1)}; \frac{1}{\sqrt{\alpha_t}}(\rvx^{(t)}-\frac{\beta_t}{\sqrt{1-\Bar{\alpha}_t}}\meps), \frac{(1-\Bar{\alpha}_{t-1})\beta_t}{1-\Bar{\alpha}_t}\bm I)$, where $\meps\sim \gN(\bm 0, \bm I)$. We fix $\beta_t$ to constants. As a result, the posterior $q(\cdot)$ has no learnable parameters, and $L_T$ does not need to be trained and can be disregarded during training.

For $p_{\theta}(\rvx^{(t-1)}\vert\rvx^{(t)}, \rvc)=\mathcal N(\rvx^{(t-1)};\meps_{\theta}(\rvx^{(t)}, \rvc, t), \Sigma_{\theta}(\rvx^{(t)}, \rvc, t))$ where $1<t<T$, let $\Sigma_{\theta}(\rvx^{(t)}, \rvc, t)=\sigma_t^2\bm I$ be time-dependent constants. With $p_{\theta}(\rvx^{(t-1)}\vert\rvx^{(t)}, \rvc)=\mathcal N(\rvx^{(t-1)};\meps_{\theta}(\rvx^{(t)}, \rvc, t), \sigma_t^2\bm I)$, we have:

\begin{align}
    L_{t-1}=\mathbb E_{q(\rvx^{1:T}\vert \rvx)}[\frac{1}{2\sigma_t^2}\vert\vert\Tilde{\meps}_t(\rvx^{(t)}, \rvx) - \meps_{\theta}(\rvx^{(t)}, \rvc, t)\vert\vert^2] + C, \ \ \text{s.t.} \ \ \{\rvx, \rvc\}\in\mathcal D_{p}
\end{align}
where $C$ is a constant not related to $\rvc$, $\Tilde{\meps}_t(\rvx^{(t)}, \rvx)=\frac{1}{\sqrt{\alpha_t}}(\rvx^{(t)}-\frac{\beta_t}{\sqrt{1-\bar \alpha}_t}\meps)$ based on Eq.~\ref{eq:mu}, $\meps\sim\mathcal N(\bm 0, \bm I)$. Plug $\Tilde{\meps}_t(\rvx^{(t)}, \rvx)=\frac{1}{\sqrt{\alpha_t}}(\rvx^{(t)}-\frac{\beta_t}{\sqrt{1-\bar {\alpha}_t}}\meps)$ and $\Tilde{\meps}_t(\rvx^{(t)}, \rvx)=\frac{1}{\sqrt{\alpha_t}}(\rvx^{(t)}-\frac{\beta_t}{\sqrt{1-\bar {\alpha}_t}}\meps_\theta(\rvx^{(t)}, \rvc, t))$, we have:
\begin{align}
    L_{t-1}=\mathbb E_{q(\rvx^{1:T}\vert \rvx)}[\frac{\beta_t}{2\sigma_t^2\sqrt{\alpha_t(1-\bar {\alpha}_t)}}\vert\vert\meps - \meps_\theta(\rvx^{(t)}, \rvc, t)\vert\vert^2] + C, \ \ \text{s.t.} \ \ \{\rvx, \rvc\}\in\gD_{p}
\end{align}

A simpler version of empirical loss in our implementation:
\begin{align}
    \mathcal L_1(\theta, \rvx, \rvc)=\mathbb E_{\rvx,\rvc\sim p_{\gD_p}(\rvx,\rvc), t}\vert\vert\meps-\meps_{\theta}(\rvx^{(t)}, \rvc, t)\vert\vert^2,
\end{align}
which is the expression of $\gL_2(\theta)$ in Eq.~\ref{eq:obj_l2}. When compute $\gL_1(\theta)$ in Eq.~\ref{eq:obj_l2}, replace $\rvc$ with $\emptyset$. When it comes to classifier-free diffusion model, we can train unconditional denoising diffusion model via $\gL_1(\theta)$ together with the conditional model via $\gL_2(\theta)$. The occurrence of $\emptyset$ is sampled with the probability $p_{uncond}$.

\section{Lexicographic optimization}
\label{sec:lex}
The model is firstly trained on unlabeled data, and then finetuned on labeled data under the framework of lexicographic optimization~\citep{gong2021bi}. Lexicographic optimization is used to balance the optimization of the learning objective (i.e., $\hat{\gL}_2(\theta)$) and its constraint (i.e., $\hat{\gL}_1'(\theta)$). Specifically, lexicographic optimization leverage a dynamic gradient descent~\citep{gong2021bi}:
\begin{align}
    \theta\leftarrow\theta - \omega\cdot(\nabla \hat\gL_2(\theta) + \lambda\nabla\hat{\gL}_1'(\theta)),
\end{align}
where $\omega$ is predefined positive step size, and $\lambda$ is designed to satisfy the following desiderata:

(1) When the constraint is not satisfied (i.e., $\hat{\gL}_1'(\theta)>\hat\xi$), we should let the optimization focus on decreasing $\hat\gL_1'(\theta)$ to meet the constraint as fast as possible. At the same time, the learning objective $\\hatgL_2(\theta)$ should serve as the secondary objective and minimized to the degree that it does not hurt the descent of $\hat\gL_1'(\theta)$.

(2) When the constraint is satisfied (i.e., $\hat{\gL}_1'(\theta)\le\hat\xi$), we should focus on optimizing $\hat\gL_2(\theta)$. However, the increasing rate of $\hat\gL_1'(\theta)$ should in properly controlled so that $\theta$ stays inside or nearby the feasible set while we minimize $\hat\gL_2(\theta)$. 

Both properties can be satisfied if $\lambda$ is selected by the following optimization:
\begin{align}
    \lambda \leftarrow \argmin_\lambda\{\|\nabla\hat\gL_2(\theta)-\lambda\|~~~s.t.~~~\nabla\hat\gL_1'(\theta)^T\lambda\ge \phi(\theta)\},
    \label{eq:lbd}
\end{align}
where we want $\lambda$ to be as close to $\nabla\hat\gL_2(\theta)$ as much as possible, but subject to a lower bound on the inner product of $\nabla\hat\gL_1'(\theta)$ and $\lambda$ to make sure that the change of $\hat\gL_1'(\theta)$ is controlled by the location of $\theta$. The $\phi(\theta)$ is the dynamic barrier function which balance the loss minimization with constraint satisfaction by controlling the inner product between $\hat\gL_1'(\theta)$ and $\lambda$. To achieve the desiderata on $\lambda$, we should keep $\phi(\theta)$ to have the same sign as $\hat\gL_1'(\theta)-\hat\xi$ so that the constraint $\hat\gL_1'(\theta)\le\hat\xi$ is equivalent to $\{\theta: \phi(\theta)\le 0\}$, which is:
\begin{align}
    sign(\phi(\theta))=sign(\hat\gL_1'(\theta)-\hat\xi)
\end{align}
It is straight to see that the following choice of $\lambda$ satisfies the dual problem of Eq.~\ref{eq:lbd}:
\begin{align}
    \lambda =\argmin_{\lambda\ge 0}\{\|\nabla\gL_2(\theta)+\lambda \gL_1'(\theta)\|^2-\lambda\phi(\theta)\} = \max(\frac{\phi(\theta) - \nabla\hat\gL_2(\theta)^T\nabla\hat{\gL}_1'(\theta)}{\|\nabla\hat{\gL}_1'(\theta)\|^2}, 0)
\end{align}
with the choice of $\phi(\theta)$:
\begin{align}
    \phi(\theta) =&\min(\alpha(\hat{\gL}_1'(\theta)-\gamma\cdot\hat\xi), \beta\|\nabla\hat{\gL}_1'(\theta)\|^2),
\end{align}
where $\alpha$, $\beta$ and $\gamma$ are predefined positive hyperparameters. 

\section{Proof of Theorem 1}
First, we define sub-exponential random variable and then and elucidate its relationship with sub-Gaussian random variable. 

\label{app:bound}
\begin{definition}[Sub-exponential random variable]
    The random variable $X$ with mean $\mu$ is sub-exponential with parameters $(v, b)$ if for $\forall \lambda<\frac{1}{b}$, $\mathbb E_X[\exp\{\lambda(X-\mu)\}]\le \exp(\frac{v^2\lambda^2}{2})$.
\end{definition}

\begin{lemma}
    The square of a zero-mean sub-Gaussian random variable with parameter $\sigma^2$ is a sub-exponential random variable with parameter $(4\sqrt{2}\sigma^2, 4\sigma^2)$~\citep{honorio2014tight}.
    % \yub{can omit proof (this is standard, just cite a ref).}
\end{lemma}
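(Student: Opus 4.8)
The plan is to verify the moment‑generating‑function condition in the definition directly. Writing $\mu:=\mathbb E[X^2]$, it suffices to show that for every $\lambda$ with $|\lambda|<\tfrac{1}{4\sigma^2}$ one has $\mathbb E\bigl[\exp\{\lambda(X^2-\mu)\}\bigr]\le\exp\{16\sigma^4\lambda^2\}$, since $16\sigma^4=\tfrac12(4\sqrt2\,\sigma^2)^2$ and $4\sigma^2$ is the reciprocal‑radius parameter. First I would turn the sub‑Gaussian hypothesis into moment bounds: applying Chernoff's inequality to $X$ and to $-X$ (both sub‑Gaussian with variance proxy $\sigma^2$) gives the two‑sided tail bound $\mathbb P(|X|>t)\le 2\exp\{-t^2/(2\sigma^2)\}$, and then the layer‑cake identity $\mathbb E[X^{2k}]=\int_0^\infty 2k\,t^{2k-1}\,\mathbb P(|X|>t)\,dt$, a change of variables, and the Gamma integral yield $\mathbb E[X^{2k}]\le 2\,k!\,(2\sigma^2)^k$ for all $k\ge 1$; differentiating $\mathbb E[e^{sX}]\le e^{\sigma^2 s^2/2}$ twice at $s=0$ also records the sharper leading‑order estimates $\mathbb E[X^2]\le\sigma^2$ and $\mathbb E[X^4]\le 3\sigma^4$.

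Next I would expand $\mathbb E[\exp\{\lambda(X^2-\mu)\}]=\sum_{k\ge 0}\tfrac{\lambda^k}{k!}\,\mathbb E[(X^2-\mu)^k]$: the $k=0$ term is $1$ and the $k=1$ term vanishes because $\mu$ is exactly the mean of $X^2$, so only $k\ge 2$ survives. Bounding $|\mathbb E[(X^2-\mu)^k]|\le\mathbb E|X^2-\mu|^k$ and using the triangle inequality in $L^k$ with $\mu=\mathbb E[X^2]\le\|X^2\|_k$ gives $\mathbb E|X^2-\mu|^k\le 2^k\,\mathbb E[X^{2k}]\le 2\,k!\,(4\sigma^2)^k$, so each surviving term is dominated by a geometric term with ratio $4\sigma^2|\lambda|$. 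For $|\lambda|<\tfrac{1}{4\sigma^2}$ this ratio is $<1$, the tail $\sum_{k\ge 2}(4\sigma^2|\lambda|)^k$ sums to a quantity of order $\sigma^4\lambda^2$, and $1+x\le e^x$ converts the resulting bound $1+O(\sigma^4\lambda^2)$ into $\exp\{O(\sigma^4\lambda^2)\}$. Carrying the constants through — in particular using $\mathbb E[X^2]\le\sigma^2$ and $\operatorname{Var}(X^2)\le\mathbb E[X^4]\le 3\sigma^4$ for the dominant $k=2$ term — sharpens the leading constant to the stated value, giving $v=4\sqrt2\,\sigma^2$ and $b=4\sigma^2$ exactly as in~\citet{honorio2014tight}. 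The case $\lambda<0$ needs no separate treatment: the same series bound holds verbatim with $|\lambda|$ in place of $\lambda$.

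The conceptual content is short; the real work is the constant bookkeeping in the last step. One tempting shortcut — the Gaussian‑chaos identity $\mathbb E[e^{\lambda X^2}]=\mathbb E_{g\sim\mathcal N(0,1)}\mathbb E_X[e^{\sqrt{2\lambda}\,gX}]\le\mathbb E_g[e^{\sigma^2\lambda g^2}]=(1-2\sigma^2\lambda)^{-1/2}$ for $\lambda\ge 0$ — does \emph{not} by itself deliver the statement, because it effectively replaces the true first moment $\mathbb E[X^2]$ by the proxy $\sigma^2$ and leaves an uncancelled term linear in $\lambda$ after recentering by $\mu$; this is precisely why the power‑series expansion (whose linear term genuinely vanishes) together with the explicit geometric‑tail estimate is the right route. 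The only additional care needed is to keep the radius of convergence of the series matched to $1/b=1/(4\sigma^2)$, which comes for free from the factor $2^k$ produced by the $L^k$ triangle inequality.
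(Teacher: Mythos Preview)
The paper does not prove this lemma: it is stated with a citation to \citet{honorio2014tight} and then immediately used as a black box inside the proof of Theorem~\ref{thm:bound}. There is therefore no ``paper's own proof'' to compare your attempt against.

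Your outline --- tail bound $\Rightarrow$ moment bounds $\mathbb E[X^{2k}]\le 2\,k!\,(2\sigma^2)^k$ $\Rightarrow$ power-series expansion of the centered MGF with the linear term vanishing $\Rightarrow$ geometric tail --- is the standard route and is correct in structure. One genuine gap in the constant bookkeeping: your bound on the $k$-th centered moment, $\mathbb E|X^2-\mu|^k\le 2\,k!\,(4\sigma^2)^k$, feeds into the series as $\sum_{k\ge 2}2(4\sigma^2|\lambda|)^k$, which diverges as $|\lambda|\uparrow 1/(4\sigma^2)$; so the argument as written does not deliver a bound of the form $\exp\{c\sigma^4\lambda^2\}$ with a \emph{fixed} $c$ on the full open interval $|\lambda|<1/(4\sigma^2)$. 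Sharpening only the $k=2$ term via $\operatorname{Var}(X^2)\le 3\sigma^4$ does not rescue this, because the $k\ge 3$ terms still carry the factor $(4\sigma^2|\lambda|)^k$ and force the blow-up. What your moment bound does cleanly give is the Bernstein moment condition with $b=4\sigma^2$ and $v^2=64\sigma^4$ (i.e.\ $v=8\sigma^2$), which is the right order but not the exact $v=4\sqrt{2}\,\sigma^2$ claimed. If the precise constants matter for a downstream step, you should either consult the cited reference for the tighter argument or accept a slightly larger $v$ (which would only change the absolute constant $C$ in Theorem~\ref{thm:bound}).
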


\begin{proof}
    Given that $\meps_\theta(\rvx^{(t)}, t)$ and $\meps_\theta(\rvx^{(t)}, \rvc, t)$ are sub-Gaussian random variables parameterized by $\sigma^2$, and $\meps$ is the multivariate standard Gaussian random variable, then their subtraction, $\meps_\theta(\rvx^{(t)}, t) - \meps$ and $\meps_\theta(\rvx^{(t)}, \rvc, t)-\meps$, are still sub-Gaussian random variables parameterized by $\sigma^2+1$. Then $\hat{\gL}_1(\theta) = \mathbb E_{\rvx\sim \hat{p}_{\gD}(\rvx), t}[\vert\vert\meps_{\theta}(\rvx^{(t)}, t) - \meps\vert\vert^2]$ and $\hat{\gL}_2(\theta) = \mathbb E_{\rvx, \rvc\sim \hat{p}_{\gD_p}(\rvx, \rvc), t}\vert\vert\meps_{\theta}(\rvx^{(t)}, \rvc, t) - \meps\vert\vert^2$, the square of sub-Gaussian random variables, are sub-exponential random variables parameterized by $(4\sqrt{2}\tilde{\sigma}^2, 4\tilde{\sigma}^2)$, based on the Lemma above.
    
    Then, based on Bernstein’s inequality, we have:
    \begin{align}
        p(\vert\gL_2(\theta)-\hat{\gL}_2(\theta)\vert>\epsilon) \le 2\exp (-\frac{N_p\epsilon^2}{8\sqrt{2}\tilde{\sigma}^2}\wedge \frac{N_p\epsilon}{8\tilde{\sigma}^2})
    \end{align}
    Based on the union bound inequility, we further have:
    \begin{align}
    p(\sup_{\theta\in\Theta}\vert\gL_2(\theta)-\hat{\gL}_2(\theta)\vert>\epsilon)\le&\sum_{\theta\in\Theta}p(\vert\gL_2(\theta)-\hat{\gL}_2(\theta)\vert>\epsilon)\nonumber \\
    \le&2\vert\Theta\vert\exp (-\frac{N_p\epsilon^2}{8\sqrt{2}\tilde{\sigma}^2}\wedge\frac{N_p\epsilon}{8\tilde{\sigma}^2})
    \label{eq:l1}
    \end{align}
    Following the same way, we have:
    \begin{align}
    p(\sup_{\theta\in\Theta}\vert\gL_1'(\theta) - \hat{\gL}_1'(\theta)\vert>\epsilon)\le&2\vert\Theta\vert\exp (-\frac{N_p\epsilon^2}{8\sqrt{2}\tilde{\sigma}^2}\wedge\frac{N_p\epsilon}{8\tilde{\sigma}^2})
    \label{eq:l1p}
    \end{align}
    \begin{align}
        p(\sup_{\theta\in\Theta}\vert\gL_1(\theta) - \hat{\gL}_1(\theta)\vert>\epsilon)\le&2\vert\Theta\vert\exp (-\frac{N\epsilon^2}{8\sqrt{2}\tilde{\sigma}^2}\wedge \frac{N\epsilon}{8\tilde{\sigma}^2})
        \label{eq:l2ieq}
    \end{align}

Let the probability on the RHS of Eq. \ref{eq:l1}, Eq. \ref{eq:l1p} and Eq. \ref{eq:l2ieq} be $\delta$, then we compute $\epsilon$ and plug in into LHS, then with the probability of $1-\delta$ we have:
\begin{align}
    \sup_{\theta\in\Theta}\vert\gL_2(\theta) - \hat{\gL}_2(\theta)\vert\le \epsilon_{N_p}
    \label{eq:enp1}
\end{align}

\begin{align}
    \sup_{\theta\in\Theta}\vert\gL_1'(\theta) - \hat{\gL}_1'(\theta)\vert\le \epsilon_{N_p}
    \label{eq:enp2}
\end{align}

\begin{align}
    \sup_{\theta\in\Theta}\vert\gL_1(\theta) - \hat{\gL}_1(\theta)\vert\le \epsilon_N
    \label{eq:en}
\end{align}

where $\epsilon_N = \sqrt{8\sqrt{2}\tilde{\sigma}^2}\cdot\sqrt{\frac{\log\vert\Theta\vert+\log\frac{2}{\delta}}{N}}\vee8\tilde{\sigma}^2\cdot\frac{\log\vert\Theta\vert+\log\frac{2}{\delta}}{N}$ and $\epsilon_{N_p} = \sqrt{8\sqrt{2}\tilde{\sigma}^2}\cdot\sqrt{\frac{\log\vert\Theta\vert+\log\frac{2}{\delta}}{N_p}}\vee8\tilde{\sigma}^2\cdot\frac{\log\vert\Theta\vert+\log\frac{2}{\delta}}{N_p}$ are used to simplify the notation.

Let $\epsilon = \epsilon_N + \epsilon_{N_p}$. From Eq. \ref{eq:en}, we have:
\begin{align}
    & \vert\hat{\xi} - \xi\vert\le \epsilon_N \\
    \Longrightarrow& \xi \le \hat{\xi} + \epsilon_N \\
    \Longrightarrow& \xi+\epsilon_{N_p} \le \hat{\xi} + \epsilon_{N_p} + \epsilon_N
    \label{eq:xi}
\end{align}
Based on Eq. \ref{eq:enp2}, similarly, we have $\hat{\gL}_1'(\theta)\le\gL_1'(\theta)+\epsilon_{N_p}$. Since $\xi=\inf_{\theta\in\Theta}\gL_1(\theta)$, then for $\forall \theta^*$ s.t. $\gL_1'(\theta^*)=\xi$, according to Eq. \ref{eq:xi}, we can obtain:
\begin{align}
\hat{\gL}_1'(\theta)\le\gL_1'(\theta)+\epsilon_{N_p}\le\xi+\epsilon_{N_p}\le\hat{\xi}+\epsilon_{N_p} +\epsilon_N.
\label{eq:l1p_xi}
\end{align}
Let $\hat{\theta}^*$ be the solution of Eq. \ref{eq:emp_l2} and $\theta^*$ be the solution of Eq. \ref{eq:obj_l2}. Additionally, let $\Theta^*=\{\theta:\gL_1'(\theta)\le\xi\}$, and $\hat{\Theta}^*=\{\theta:\hat{\gL}_1'(\theta;\rvx)\le\hat{\xi}+\epsilon\}$. We state that $\Theta^*\subseteq\hat{\Theta}^*$ based on Eq. \ref{eq:l1p_xi}.

Next, we prove that $\hat{\theta}^*$ competes well with $\theta^*$ on $\hat{\gL}_2(\theta; \rvx, \rvc)$ and $\gL_2(\theta)$:
\begin{align}
    \gL_2(\hat{\theta}^*)\le&\hat{\gL}_2(\hat{\theta}^*;\rvx,\rvc) + \epsilon_{N_p} & \text{based on Eq. \ref{eq:enp1}} \nonumber \\
    =&\min_{\theta\in\hat{\Theta}^*}\hat{\gL}_2(\theta;\rvx,\rvc) + \epsilon_{N_p} & \text{$\hat{\theta}$ is the solution of Eq. \ref{eq:emp_l2}} \nonumber \\
    \le&\min_{\theta\in\Theta^*}\hat{\gL}_2(\theta;\rvx,\rvc) + \epsilon_{N_p} & \text{$\Theta^*\subseteq\hat{\Theta}^*$}\nonumber \\
    \le&\min_{\theta\in\Theta^*}\gL_2(\theta) + 2\epsilon_{N_p} & \text{based on Eq. \ref{eq:enp1}}\nonumber \\
    =&\gL_2(\theta^*) + 2\epsilon_{N_p}
\end{align}

Next, we prove that $\hat{\theta}^*$ does not violate constraint too much:
\begin{align}
    \gL_1'(\hat{\theta}^*)\le & \hat{\gL}_1'(\hat{\theta}^*;\rvx)&\text{based on Eq. \ref{eq:enp2}} \nonumber \\
    \le & \hat{\xi} + \epsilon +\epsilon_{N_p} & \text{based on definition of $\hat{\Theta}^*$} \nonumber\\
    =& \hat{\xi} + 2\epsilon_{N_p} + \epsilon_N & \text{$\epsilon = \epsilon_{N_P} + \epsilon_N$} \nonumber\\
    \le& \xi + 2\epsilon_{N_p} + 2\epsilon_N & \text{based on Eq. \ref{eq:xi}}
\end{align}
\end{proof}

\section{More details regarding dataset}
\label{sec:text}

\subsection{Stocks employed to form time series data}
We construct the dataset by assembling 24 stocks from Yahoo Finance during their IPO date to July 8, 2023, including Ethereum USD, NVIDIA Corporation, AT\&T, Accenture plc, The Boeing Company, The Coca-Cola Company, Simon Property Group Inc., NIKE Inc., Sony Group Corporation, Chegg Inc., UnitedHealth Group Incorporated, General Motors Company, Russell 2000, JPMorgan Chase \& Co., Salesforce Inc., Lockheed Martin Corporation, Walmart Inc., NASDAQ Composite, Shell plc, Pfizer Inc., Bitcoin USD, Apple Inc., Amazon.com Inc., Alphabet Inc.
\subsection{Splitting training and testing sets}
Each dataset is divided into training and testing sets at the ratio of $80\%$ and $20\%$, respectively. We curate each dataset to have varying proportions (i.e., $2\%, 4\%, 6\%, 8\%, 10\%, 20\%, 30\%, 40\%$) of text labels.
\subsection{Constructing text descriptions}
We construct the text descriptions of molecules by two templates: (1) exact description, e.g., ``A molecule with the heat capacity of -0.11, the lomo of 0.87, the homo of -0.21, the polarizability of 0.95, the dipole moment of -1.61 and the energy gap between homo and lumo as 0.94.", and (2) general description, e.g., ``A molecule with a high homo value, a very low heat capacity, a medium polarizability, a high energy difference between homo and lomo, a very high lomo value and a high dipole moment.". These text descriptions are then refined by GPT 3.5 and the order of different properties is shuffled. We form the text description for time series data using three templates: (a) exact description, e.g., ``A time series with the frequency of 0.017, the mean of 3.12e-05, 19 peaks, the variance of 1.18e-11, the linear trend of 0.12 and the skewness of -6.15."; (b) general description, e.g., ``A time series with large average, medium frequency, nearly equal large and small values, medium negative linearity, a few peaks and large variance" and (c) description of trend, e.g., ``A time series that first stays stable, then increases with the slope of 1.25". Then those descriptions are further refined by GPT-3.5.
\section{Implementation details}
\label{app:imp}
\subsection{Details of training process}
We leverage a cross-attention layer to enhance the alignment between data and text embeddings~\citep{ma2023glyphdraw}. Specifically, we maximize the cosine-similarity between text embeddings obtained from the pretrained LLMs encoder and data embeddings diffused from original data, where the last layer of LLMs encoder is finetuned during training. Baselines models are trained for 3,000 epoches. Their finetuned version and Text2Data are both pretrained for 1,000 epoches and finetuned for another 2,000 epoches. All experiments are conduced on A100 hardware. 

For data augmentation as an ablation study (i.e., EDM-DA), we employed GPT-4 to modify the textual descriptions to augment the text-data pairs.
\subsection{Score functions}
For time series generation, we use the same score function as in \cite{rasul2021autoregressive} that consists of conditional 1-dim dilated ConvNets with residual connections adapted from the WaveNet~\citep{oord2016wavenet} and Diffwave~\citep{kong2020diffwave}. For molecule generation, we employ the same score function from EDM~\cite{hoogeboom2022equivariant} which is a Multilayer Perceptron. For motion generation, we also leverage the same score function in \cite{tevet2023human} as a straightforward transformer.

\section{Additional results}
Table~\ref{tab:mol_prop} shows that the MAE of predicted properties against intended properties of generated molecules from 2$\%$ to 100$\%$ available labels. Table~\ref{tab:mol_ablation} shows that two additional ablation studies on Molecules dataset evaluating the controllability of Text2Data. EDM-finetune-unlabel is pretrained on only unlabeled data and finetuned on labeled data; EDM-DA is trained on text-data paired augmented by GPT-4.  EDM-semi is the semi-supervised ablation studies we have introduced above. EDM-finetune is pretrained on all data (i.e., molecules with and without labels) while EDM-finetune-unlabel is pretrained on only unlabeled data and finetuned on labeled data. EDM-DA is trained on text-data paired augmented by GPT-4. EDM is a classifier-free diffusion model trained for controllable molecule generation that we choose as the baseline. The results suggest that the performance of EDM-semi is 54.65$\%$ worse than Text2Data on average, and it's even worse than the EDM baseline. This is because that EDM-semi heavily relies on the precision of the classifier to generate pseudo labels, which is usually hard given complex molecular structures. Bad pseudo label mislead the alignment between data and text during finetuning.
\begin{table*}[hbt!]
 \caption{Evaluate controllability on Molecules dataset by MAE according to $\alpha$ and $\epsilon_{HOMO}$ of molecules.}
\centering
\begin{adjustbox}{width=\textwidth,center}
\tiny
\begin{tabular}{c c ccc c ccc} 
\hline
\multirow{2}{*}{Proportion (\%)} && \multicolumn{3}{c}{$\alpha$} &&
\multicolumn{3}{c}{$\epsilon_{HOMO}$} \\\cline{3-5} \cline{7-9}
&& Text2Data & EDM-finetune & EDM && Text2Data & EDM-finetune & EDM\\ \hline
2 &&\textbf{0.66} &0.84 &0.83 &&\textbf{0.58} &0.92 &1.00 \\
4 &&\textbf{0.62} &0.74 &0.69 &&\textbf{0.56} &0.84 &1.07 \\
6 &&\textbf{0.44} &0.68 &0.69 &&\textbf{0.44} &0.92 &1.00 \\
8 &&\textbf{0.52} &0.67 &0.69 &&\textbf{0.47} &0.90 &0.95 \\
10 &&\textbf{0.34} &0.66 &0.75 &&\textbf{0.33} &0.89 &0.92 \\
20 &&\textbf{0.41} &0.60 &0.67 &&\textbf{0.42} &0.82 &0.91 \\
40 &&\textbf{0.43} &0.55 &0.55 &&\textbf{0.42} &0.78 &0.81 \\
60 &&\textbf{0.41} &0.50 &0.52 &&\textbf{0.43} &0.61 &0.59 \\
80 &&\textbf{0.40} &0.41 &0.43 &&\textbf{0.42} &0.50 &0.51 \\
100 &&\textbf{0.40} &0.40 &0.40 &&\textbf{0.42} &0.43 &0.42 \\
\hline
\end{tabular}
\end{adjustbox}
\label{tab:mol_prop}
\end{table*}

\begin{table*}[hbt!]
 \caption{Evaluate controllability on Molecules dataset by MAE according to $\alpha$ and $\epsilon_{HOMO}$ of molecules. Two additional ablation studies are added: EDM-finetune-unlabel is pretrained on only unlabeled data and finetuned on labeled data; EDM-DA is trained on text-data paired augmented by GPT-4.}
\centering
\begin{adjustbox}{width=\textwidth,center}
\begin{tabular}{c c cccccc c ccccc} 
\hline
\multirow{2}{*}{Proportion (\%)} && \multicolumn{6}{c}{$\alpha$} &&
\multicolumn{5}{c}{$\epsilon_{HOMO}$} \\\cline{3-8} \cline{10-14}
&& Text2Data & EDM-finetune &EDM-finetune-unlabel & EDM-DA & EDM-semi & EDM && Text2Data & EDM-finetune &EDM-finetune-unlabel & EDM-DA & EDM\\ \hline
2 &&\textbf{0.66} &0.84 &0.67 &0.80 &0.87 &0.83 &&\textbf{0.58} &0.92 &0.58 &0.95 &1.00 \\
4 &&\textbf{0.62} &0.74 &0.65 &0.69 &0.86 &0.69 &&\textbf{0.56} &0.84 &0.58 &0.93 &1.07 \\
6 &&\textbf{0.44} &0.68 &0.56 &0.67 &0.78 &0.69 &&\textbf{0.44} &0.92 &0.50 &0.90 &1.00 \\
8 &&\textbf{0.52} &0.67 &0.54 &0.66 &0.74 &0.69 &&\textbf{0.47} &0.90 &0.50 &0.88 &0.95 \\
10 &&\textbf{0.34} &0.66 &0.46 &0.61 &0.74 &0.75 &&\textbf{0.33} &0.89 &0.43 &0.88 &0.92 \\
\hline
\end{tabular}
\end{adjustbox}
\label{tab:mol_ablation}
\end{table*}

The results of evaluating the controllability of Text2Data and its baseline comparisons regarding variance, number of peaks and linearity of generated time series are in Table~\ref{tab:ts_cond2}.

\begin{table}[hbt!]
 \caption{Evaluate controllability on time series by MAE on testing set, according to different proportions of paired data. Lower MAE indicates better performance.}
\centering
\begin{adjustbox}{width=\columnwidth,center}
\Large
\begin{tabular}{c c ccc c ccc c ccc } 
\hline\hline
\multirow{2}{*}{Proportion (\%)} && \multicolumn{3}{c}{Variance ($\times 10^{-5}$)} &&\multicolumn{3}{c}{Number of peaks} &&\multicolumn{3}{c}{Linearity} \\\cline{3-5} \cline{7-9} \cline{11-13} 
&& Text2Data & DiffTS-finetune & DiffTS && Text2Data & DiffTS-finetune & DiffTS && Text2Data & DiffTS-finetune & DiffTS \\ \hline\hline
2 &&\textbf{5.28$\pm$11.40} &5.35$\pm$11.20 &5.70$\pm$11.40 &&\textbf{12.94$\pm$0.88} &12.95$\pm$0.87 & 13.01$\pm$0.83&&0.61$\pm$0.04 &\textbf{0.61$\pm$0.04} & 0.62$\pm$0.04\\
4 &&\textbf{5.37$\pm$11.28} &5.48$\pm$11.24 &6.08$\pm$11.31&&12.91$\pm$0.84 &\textbf{12.90$\pm$0.86} &12.93$\pm$0.84 &&\textbf{0.61$\pm$0.04} &0.61$\pm$0.04 & 0.62$\pm$0.04\\ 
6 &&\textbf{5.04$\pm$10.90} &5.19$\pm$10.42 &5.60$\pm$10.20 &&\textbf{12.89$\pm$0.90} &12.90$\pm$0.83 & 12.90$\pm$0.83&&\textbf{0.61$\pm$0.04} &0.61$\pm$0.04 & 0.62$\pm$0.04\\ 
8 &&\textbf{5.30$\pm$10.95} &5.59$\pm$11.19 &5.60$\pm$10.40 &&\textbf{12.88$\pm$0.86} &12.89$\pm$0.85 &12.95$\pm$0.87 &&\textbf{0.61$\pm$0.04} &0.61$\pm$0.04 & 0.61$\pm$0.04\\
10 &&\textbf{5.08$\pm$20.48} &5.41$\pm$16.23 &5.80$\pm$11.00 &&\textbf{12.75$\pm$0.93} &12.87$\pm$0.90 &12.90$\pm$0.83 &&\textbf{0.61$\pm$0.04} &0.61$\pm$0.04 & 0.61$\pm$0.04\\
20 &&\textbf{5.09$\pm$10.98} &5.37$\pm$11.09 &6.40$\pm$11.20 &&\textbf{12.87$\pm$0.85} & 12.88$\pm$0.85&12.90$\pm$0.84 &&\textbf{0.61$\pm$0.04} &0.61$\pm$0.04 &0.61$\pm$0.04 \\
30 &&\textbf{4.85$\pm$10.58} &5.19$\pm$9.90 &5.70$\pm$10.20 &&\textbf{12.84$\pm$0.83} &12.87$\pm$0.87 &12.91$\pm$0.88 &&\textbf{0.61$\pm$0.04} &0.61$\pm$0.04 & 0.61$\pm$0.04\\ 
40 &&\textbf{4.77$\pm$10.42} &5.11$\pm$14.10 &5.34$\pm$13.10 &&\textbf{12.84$\pm$0.87} &12.88$\pm$0.90 &12.89$\pm$0.86 &&\textbf{0.60$\pm$0.04} &0.61$\pm$0.04 & 0.61$\pm$0.04\\ 
\hline
\end{tabular}
\end{adjustbox}
\label{tab:ts_cond2}
\end{table}
\begin{table*}[hbt!]
 \caption{Evaluate generation quality on Molecule dataset by $-\log p$ and validity according to different proportions of paired data. Lower $-\log p$ and higher validity indicate better performance.}
\centering
\begin{adjustbox}{width=\textwidth,center}
\tiny
\begin{tabular}{c c ccc c ccc} 
\hline
\multirow{2}{*}{Proportion (\%)} && \multicolumn{3}{c}{-log p $\downarrow$} &&
\multicolumn{3}{c}{Validity $\uparrow$} \\\cline{3-5} \cline{7-9}
&& Text2Data & EDM-finetune & EDM && Text2Data & EDM-finetune & EDM\\ \hline
2 && \textbf{-111.39$\pm$0.92}&-74.88$\pm$1.82 &-49.15$\pm$0.96 && \textbf{0.97$\pm$0.07}&0.93$\pm$0.11 & 0.86$\pm$0.09\\
4 && \textbf{-119.09$\pm$0.30}&-87.56$\pm$4.18 &-78.72$\pm$2.95 &&0.95$\pm$0.07 &\textbf{0.96$\pm$0.07} &0.83$\pm$0.15 \\
% 6 && \textbf{-127.6850} &-82.5125 &-96.3332 && \textbf{1.0000} &0.7813 &0.6250 && \textbf{1.0000} &0.9853 &0.9657 && & & && & & \\
6 && \textbf{-119.55$\pm$0.59}&-97.32$\pm$2.09 &-69.58$\pm$1.97 && \textbf{0.97$\pm$0.07}& 0.96$\pm$0.05& 0.81$\pm$0.14\\
8 && \textbf{-119.40$\pm$0.68} & -101.31$\pm$1.11& -85.19$\pm$1.58&& \textbf{0.97$\pm$0.07}& 0.93$\pm$0.07& 0.90$\pm$0.11\\
10 && \textbf{-121.37$\pm$1.24}& -104.17$\pm$1.94&-85.73$\pm$1.00 && \textbf{0.96$\pm$0.06}& 0.95$\pm$0.13& 0.88$\pm$0.10\\
20 &&\textbf{-119.58$\pm$1.61} & -104.08$\pm$2.03&-76.39$\pm$2.11 && \textbf{0.97$\pm$0.07}&0.95$\pm$0.07 &0.90$\pm$0.09 \\
30 &&\textbf{-121.00$\pm$1.07} & -115.58$\pm$0.95&-76.22$\pm$1.26 && \textbf{0.97$\pm$0.07}&0.95$\pm$0.07 &0.91$\pm$0.09 \\
40 && \textbf{-119.90$\pm$0.97}&-114.00$\pm$0.59 & -80.97$\pm$0.09&& \textbf{0.97$\pm$0.07}& 0.95$\pm$0.08& 0.90$\pm$0.10\\
\hline
\end{tabular}
\end{adjustbox}
\label{tab:mol_gen1}
\end{table*}

\begin{table*}[hbt!]
\caption{Evaluate generation quality on Molecule dataset by molecular and atom stability of generated molecules according to different proportions of paired data. Higher molecular stability and atom stability indicate better performance.}
\centering
\begin{adjustbox}{width=\textwidth,center}
\tiny
\begin{tabular}{c c ccc c ccc} 
\hline
\multirow{2}{*}{Proportion (\%)} && \multicolumn{3}{c}{Mol. Stability $\uparrow$} &&
\multicolumn{3}{c}{Atom Stability $\uparrow$} \\\cline{3-5} \cline{7-9}
&& Text2Data & EDM-finetune & EDM && Text2Data & EDM-finetune & EDM\\ \hline
2 && \textbf{0.86$\pm$0.14}& 0.85$\pm$0.04 &0.65$\pm$0.04 &&\textbf{0.99$\pm$0.01} &\textbf{0.99$\pm$0.01} &0.96$\pm$0.01 \\
4 && \textbf{0.87$\pm$0.09}& 0.83$\pm$0.08&0.66$\pm$0.09 &&\textbf{0.99$\pm$0.01} &0.98$\pm$0.01 &0.97$\pm$0.01 \\
% 6 && \textbf{-127.6850} &-82.5125 &-96.3332 && \textbf{1.0000} &0.7813 &0.6250 && \textbf{1.0000} &0.9853 &0.9657 && & & && & & \\
6 &&\textbf{0.88$\pm$0.16} &0.86$\pm$0.06 &0.73$\pm$0.04 &&\textbf{0.99$\pm$0.02} &\textbf{0.99$\pm$0.01} &0.97$\pm$0.01 \\
8 && \textbf{0.88$\pm$0.11}&0.87$\pm$0.04 &0.79$\pm$0.05 && \textbf{0.99$\pm$0.01}&\textbf{0.99$\pm$0.01} &0.98$\pm$0.01 \\
10 && \textbf{0.88$\pm$0.10}&0.83$\pm$0.06 &0.77$\pm$0.11 && \textbf{0.99$\pm$0.01}&0.98$\pm$0.01 &0.98$\pm$0.01 \\
20 && \textbf{0.88$\pm$0.10}& 0.87$\pm$0.08& 0.79$\pm$0.09&& \textbf{0.99$\pm$0.01}&0.98$\pm$0.01 &0.98$\pm$0.01 \\
30 &&\textbf{0.89$\pm$0.10} &0.87$\pm$0.07 &0.79$\pm$0.09 &&\textbf{0.99$\pm$0.01} & 0.98$\pm$0.01&0.98$\pm$0.01 \\
40 &&\textbf{0.89$\pm$0.10} &0.86$\pm$0.06 &0.79$\pm$0.09 && \textbf{0.99$\pm$0.01}&0.98$\pm$0.01 &0.98$\pm$0.01 \\
\hline
\end{tabular}
\end{adjustbox}
\label{tab:mol_gen2}
\end{table*}
\begin{table*}[hbt!]
\caption{Evaluate generation quality on HumanML3D dataset by FID and Diversity according to different proportions of paired data. Low FID and higher diversity indicate better performance.}
\centering
\begin{adjustbox}{width=\textwidth,center}
\tiny
\begin{tabular}{c c ccc c ccc} 
\hline
\multirow{2}{*}{Proportion (\%)} && \multicolumn{3}{c}{FID $\downarrow$} &&
\multicolumn{3}{c}{Diversity $\uparrow$} \\\cline{3-5} \cline{7-9}
&& Text2Data & MDM-finetune & MDM && Text2Data & MDM-finetune & MDM  \\ \hline
2 &&\textbf{1.22$\pm$0.12} &1.23$\pm$0.03 &3.09$\pm$0.12 &&\textbf{9.26$\pm$0.07} & 9.08$\pm$0.17&8.84$\pm$0.14 \\
4 && \textbf{1.12$\pm$0.11}&1.16$\pm$0.17 &3.06$\pm$0.21 &&\textbf{9.31$\pm$0.21} &9.13$\pm$0.20 &8.80$\pm$0.18 \\
6 &&1.00$\pm$0.13 &\textbf{0.64$\pm$0.11} &1.13$\pm$0.12 &&\textbf{9.40$\pm$0.13} & 9.39$\pm$0.22&8.91$\pm$0.22 \\
8 && 1.40$\pm$0.14& \textbf{1.32$\pm$0.15}& 1.34$\pm$0.18&& \textbf{9.43$\pm$0.20}& 9.21$\pm$0.18&8.95$\pm$0.19 \\
10 &&\textbf{1.49$\pm$0.19} &1.52$\pm$0.13 & 1.50$\pm$0.15&&9.59$\pm$0.09 &\textbf{9.74$\pm$0.13} & 9.37$\pm$0.15\\
20 && \textbf{0.92$\pm$0.06}&1.02$\pm$0.12 &1.07$\pm$0.06 && \textbf{9.77$\pm$0.20}& 9.72$\pm$0.17&9.66$\pm$0.15 \\
30 &&\textbf{0.81$\pm$0.10} &0.99$\pm$0.13 &1.11$\pm$0.10 &&\textbf{9.79$\pm$0.11} &9.70$\pm$0.12 &9.63$\pm$0.14 \\
40 && \textbf{0.63$\pm$0.12}& 0.95$\pm$0.11&1.13$\pm$0.15 && \textbf{9.74$\pm$0.13}& 9.70$\pm$0.20& 9.38$\pm$0.15\\
\hline
\end{tabular}
\end{adjustbox}
\label{tab:mt_gen}
\end{table*}
Table~\ref{tab:mol_gen1} shows the results of evaluating the generation quality of Text2Data and baseline comparisons on Molecule dataset based on $-\log p$ and atom validity. 

Table~\ref{tab:mol_gen2} shows the results of evaluating the generation quality of Text2Data and baseline comparisons on Molecule dataset based on molecular and atom stability.

Table~\ref{tab:mt_gen} shows the results of evaluating the generation quality of Text2Data and baseline comparisons on motion dataset based on FID and diversity.

Figure~\ref{fig:tsne} shows the t-SNE plot on time series data generated by Text2Data, DiffTS-finetune model and DiffTS, where Text2Data generated time series are more aligned with the ground truth indicated by their overlap.

\begin{figure*}[hbt!]
\centering
\begin{subfigure}{.26\textwidth}
  \includegraphics[width=\linewidth]{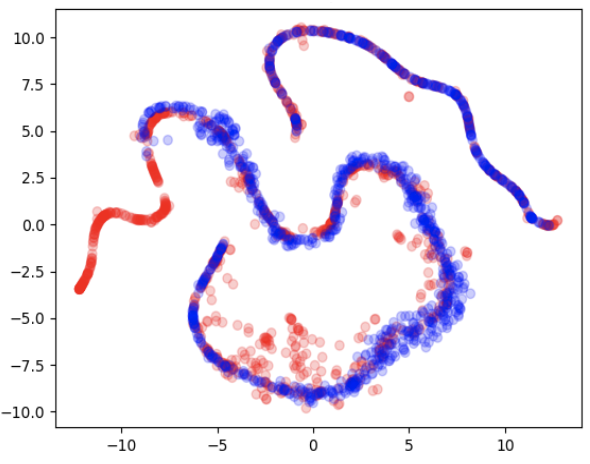}  
  \caption{Text2Data}
\end{subfigure}
\begin{subfigure}{.26\textwidth}
  \includegraphics[width=\linewidth]{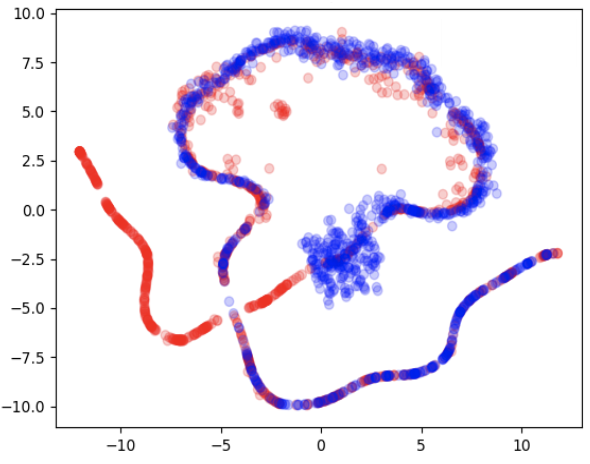}
  \caption{DiffTS-finetune}
\end{subfigure}
\begin{subfigure}{.26\textwidth}
  \includegraphics[width=\linewidth]{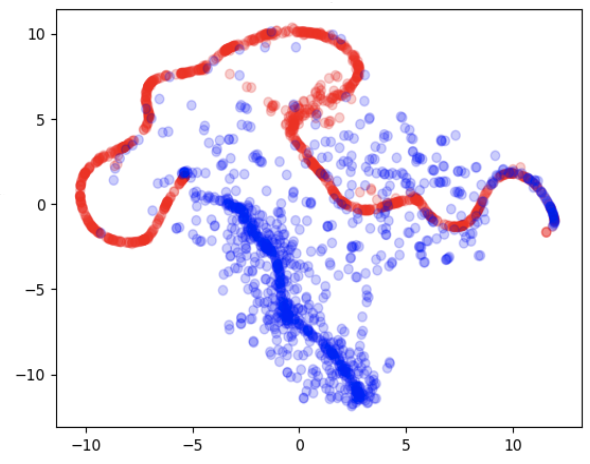}  
  \caption{DiffTS}
\end{subfigure}
\caption{t-SNE visualization on time series data generated by Text2Data, DiffTS-finetune model and DiffTS. Red denotes ground truth, and blue denotes generated data.}
\label{fig:tsne}
\end{figure*}
% \nobibliography*
\section{Ethical statement}
We develop our method from publicly available QM9~\citep{ramakrishnan2014quantum} and HumanML3D~\citep{Guo_2022_CVPR} datasets, and stock data from public Yahoo Finance. It is important to note that, like other text-to-data models, our implementation will likely reflect the socioeconomic and entity biases inherent in datasets that we use. Additionally, although our method is designed for controllable data generation from text, we are not able to control the prompt that user inputs, which may contain improper contents.
\bibliography{aaai25}

\begin{thebibliography}{57}
\providecommand{\natexlab}[1]{#1}

\bibitem[{Cheuk, Herremans, and Su(2021)}]{cheuk2021reconvat}
Cheuk, K.~W.; Herremans, D.; and Su, L. 2021.
\newblock Reconvat: A semi-supervised automatic music transcription framework for low-resource real-world data.
\newblock In \emph{Proceedings of the 29th ACM International Conference on Multimedia}, 3918--3926.

\bibitem[{Dhariwal and Nichol(2021)}]{dhariwal2021diffusion}
Dhariwal, P.; and Nichol, A. 2021.
\newblock Diffusion models beat gans on image synthesis.
\newblock \emph{Advances in neural information processing systems}, 34: 8780--8794.

\bibitem[{Du et~al.(2020)Du, Li, Yang, and Horng}]{du2020multivariate}
Du, S.; Li, T.; Yang, Y.; and Horng, S.-J. 2020.
\newblock Multivariate time series forecasting via attention-based encoder--decoder framework.
\newblock \emph{Neurocomputing}, 388: 269--279.

\bibitem[{Garcia~Satorras et~al.(2021)Garcia~Satorras, Hoogeboom, Fuchs, Posner, and Welling}]{garcia2021n}
Garcia~Satorras, V.; Hoogeboom, E.; Fuchs, F.; Posner, I.; and Welling, M. 2021.
\newblock E (n) equivariant normalizing flows.
\newblock \emph{Advances in Neural Information Processing Systems}, 34: 4181--4192.

\bibitem[{Gong and Liu(2021)}]{gong2021bi}
Gong, C.; and Liu, X. 2021.
\newblock Bi-objective trade-off with dynamic barrier gradient descent.
\newblock \emph{Advances in Neural Information Processing Systems}.

\bibitem[{Gozalo-Brizuela and Garrido-Merchan(2023)}]{gozalo2023chatgpt}
Gozalo-Brizuela, R.; and Garrido-Merchan, E.~C. 2023.
\newblock ChatGPT is not all you need. A State of the Art Review of large Generative AI models.
\newblock \emph{arXiv preprint arXiv:2301.04655}.

\bibitem[{Guo et~al.(2022{\natexlab{a}})Guo, Zou, Zuo, Wang, Ji, Li, and Cheng}]{guo2022generating}
Guo, C.; Zou, S.; Zuo, X.; Wang, S.; Ji, W.; Li, X.; and Cheng, L. 2022{\natexlab{a}}.
\newblock Generating diverse and natural 3d human motions from text.
\newblock In \emph{Proceedings of the IEEE/CVF Conference on Computer Vision and Pattern Recognition}, 5152--5161.

\bibitem[{Guo et~al.(2022{\natexlab{b}})Guo, Zou, Zuo, Wang, Ji, Li, and Cheng}]{Guo_2022_CVPR}
Guo, C.; Zou, S.; Zuo, X.; Wang, S.; Ji, W.; Li, X.; and Cheng, L. 2022{\natexlab{b}}.
\newblock Generating Diverse and Natural 3D Human Motions From Text.
\newblock In \emph{Proceedings of the IEEE/CVF Conference on Computer Vision and Pattern Recognition (CVPR)}, 5152--5161.

\bibitem[{Guo et~al.(2020)Guo, Zuo, Wang, Zou, Sun, Deng, Gong, and Cheng}]{guo2020action2motion}
Guo, C.; Zuo, X.; Wang, S.; Zou, S.; Sun, Q.; Deng, A.; Gong, M.; and Cheng, L. 2020.
\newblock Action2motion: Conditioned generation of 3d human motions.
\newblock In \emph{Proceedings of the 28th ACM International Conference on Multimedia}, 2021--2029.

\bibitem[{Hedderich et~al.(2020)Hedderich, Lange, Adel, Str{\"o}tgen, and Klakow}]{hedderich2020survey}
Hedderich, M.~A.; Lange, L.; Adel, H.; Str{\"o}tgen, J.; and Klakow, D. 2020.
\newblock A survey on recent approaches for natural language processing in low-resource scenarios.
\newblock \emph{arXiv preprint arXiv:2010.12309}.

\bibitem[{Ho et~al.(2022)Ho, Chan, Saharia, Whang, Gao, Gritsenko, Kingma, Poole, Norouzi, Fleet et~al.}]{ho2022imagen}
Ho, J.; Chan, W.; Saharia, C.; Whang, J.; Gao, R.; Gritsenko, A.; Kingma, D.~P.; Poole, B.; Norouzi, M.; Fleet, D.~J.; et~al. 2022.
\newblock Imagen video: High definition video generation with diffusion models.
\newblock \emph{arXiv preprint arXiv:2210.02303}.

\bibitem[{Ho and Salimans(2022)}]{ho2022classifier}
Ho, J.; and Salimans, T. 2022.
\newblock Classifier-free diffusion guidance.
\newblock \emph{arXiv preprint arXiv:2207.12598}.

\bibitem[{Honorio and Jaakkola(2014)}]{honorio2014tight}
Honorio, J.; and Jaakkola, T. 2014.
\newblock Tight bounds for the expected risk of linear classifiers and PAC-Bayes finite-sample guarantees.
\newblock In \emph{Artificial Intelligence and Statistics}, 384--392. PMLR.

\bibitem[{Hoogeboom et~al.(2022)Hoogeboom, Satorras, Vignac, and Welling}]{hoogeboom2022equivariant}
Hoogeboom, E.; Satorras, V.~G.; Vignac, C.; and Welling, M. 2022.
\newblock Equivariant diffusion for molecule generation in 3d.
\newblock In \emph{International conference on machine learning}, 8867--8887. PMLR.

\bibitem[{Hu, Luo, and Chen(2022)}]{hu2022make}
Hu, Y.; Luo, C.; and Chen, Z. 2022.
\newblock Make it move: controllable image-to-video generation with text descriptions.
\newblock In \emph{Proceedings of the IEEE/CVF Conference on Computer Vision and Pattern Recognition}, 18219--18228.

\bibitem[{Huang et~al.(2023)Huang, Huang, Yang, Ren, Liu, Li, Ye, Liu, Yin, and Zhao}]{huang2023make}
Huang, R.; Huang, J.; Yang, D.; Ren, Y.; Liu, L.; Li, M.; Ye, Z.; Liu, J.; Yin, X.; and Zhao, Z. 2023.
\newblock Make-an-audio: Text-to-audio generation with prompt-enhanced diffusion models.
\newblock \emph{arXiv preprint arXiv:2301.12661}.

\bibitem[{Huang et~al.(2022)Huang, Zhao, Liu, Liu, Cui, and Ren}]{huang2022prodiff}
Huang, R.; Zhao, Z.; Liu, H.; Liu, J.; Cui, C.; and Ren, Y. 2022.
\newblock Prodiff: Progressive fast diffusion model for high-quality text-to-speech.
\newblock In \emph{Proceedings of the 30th ACM International Conference on Multimedia}, 2595--2605.

\bibitem[{Iman, Arabnia, and Rasheed(2023)}]{iman2023review}
Iman, M.; Arabnia, H.~R.; and Rasheed, K. 2023.
\newblock A review of deep transfer learning and recent advancements.
\newblock \emph{Technologies}, 11(2): 40.

\bibitem[{Irwin et~al.(2012)Irwin, Sterling, Mysinger, Bolstad, and Coleman}]{irwin2012zinc}
Irwin, J.~J.; Sterling, T.; Mysinger, M.~M.; Bolstad, E.~S.; and Coleman, R.~G. 2012.
\newblock ZINC: a free tool to discover chemistry for biology.
\newblock \emph{Journal of chemical information and modeling}, 52(7): 1757--1768.

\bibitem[{Ito and Johnson(2017)}]{ljspeech17}
Ito, K.; and Johnson, L. 2017.
\newblock The LJ Speech Dataset.
\newblock \url{https://keithito.com/LJ-Speech-Dataset/}.

\bibitem[{Jiang et~al.(2021)Jiang, Huang, Pan, Loy, and Liu}]{jiang2021talkedit}
Jiang, Y.; Huang, Z.; Pan, X.; Loy, C.~C.; and Liu, Z. 2021.
\newblock Talk-to-Edit: Fine-Grained Facial Editing via Dialog.
\newblock In \emph{Proceedings of International Conference on Computer Vision (ICCV)}.

\bibitem[{Jin, Barzilay, and Jaakkola(2018)}]{jin2018junction}
Jin, W.; Barzilay, R.; and Jaakkola, T. 2018.
\newblock Junction tree variational autoencoder for molecular graph generation.
\newblock In \emph{International conference on machine learning}, 2323--2332. PMLR.

\bibitem[{Kawar et~al.(2023)Kawar, Zada, Lang, Tov, Chang, Dekel, Mosseri, and Irani}]{kawar2023imagic}
Kawar, B.; Zada, S.; Lang, O.; Tov, O.; Chang, H.; Dekel, T.; Mosseri, I.; and Irani, M. 2023.
\newblock Imagic: Text-based real image editing with diffusion models.
\newblock In \emph{Proceedings of the IEEE/CVF Conference on Computer Vision and Pattern Recognition}, 6007--6017.

\bibitem[{Kim, Kim, and Yoon(2022)}]{kim2022guided}
Kim, H.; Kim, S.; and Yoon, S. 2022.
\newblock Guided-tts: A diffusion model for text-to-speech via classifier guidance.
\newblock In \emph{International Conference on Machine Learning}, 11119--11133. PMLR.

\bibitem[{Kong et~al.(2020)Kong, Ping, Huang, Zhao, and Catanzaro}]{kong2020diffwave}
Kong, Z.; Ping, W.; Huang, J.; Zhao, K.; and Catanzaro, B. 2020.
\newblock Diffwave: A versatile diffusion model for audio synthesis.
\newblock \emph{arXiv preprint arXiv:2009.09761}.

\bibitem[{Kumari et~al.(2023)Kumari, Zhang, Zhang, Shechtman, and Zhu}]{kumari2023multi}
Kumari, N.; Zhang, B.; Zhang, R.; Shechtman, E.; and Zhu, J.-Y. 2023.
\newblock Multi-concept customization of text-to-image diffusion.
\newblock In \emph{Proceedings of the IEEE/CVF Conference on Computer Vision and Pattern Recognition}, 1931--1941.

\bibitem[{Li et~al.(2023)Li, Duan, Zhou, and Lu}]{li2023diffusion}
Li, M.; Duan, Y.; Zhou, J.; and Lu, J. 2023.
\newblock Diffusion-sdf: Text-to-shape via voxelized diffusion.
\newblock In \emph{Proceedings of the IEEE/CVF Conference on Computer Vision and Pattern Recognition}, 12642--12651.

\bibitem[{Li et~al.(2018)Li, Min, Shen, Carlson, and Carin}]{li2018video}
Li, Y.; Min, M.; Shen, D.; Carlson, D.; and Carin, L. 2018.
\newblock Video generation from text.
\newblock In \emph{Proceedings of the AAAI conference on artificial intelligence}, volume~32.

\bibitem[{Lim et~al.(2023)Lim, Kim, Park, and Park}]{lim2023regular}
Lim, H.; Kim, M.; Park, S.; and Park, N. 2023.
\newblock Regular Time-series Generation using SGM.
\newblock \emph{arXiv preprint arXiv:2301.08518}.

\bibitem[{Lin et~al.(2023)Lin, Gao, Tang, Takikawa, Zeng, Huang, Kreis, Fidler, Liu, and Lin}]{lin2023magic3d}
Lin, C.-H.; Gao, J.; Tang, L.; Takikawa, T.; Zeng, X.; Huang, X.; Kreis, K.; Fidler, S.; Liu, M.-Y.; and Lin, T.-Y. 2023.
\newblock Magic3d: High-resolution text-to-3d content creation.
\newblock In \emph{Proceedings of the IEEE/CVF Conference on Computer Vision and Pattern Recognition}, 300--309.

\bibitem[{Lin et~al.(2014)Lin, Maire, Belongie, Hays, Perona, Ramanan, Doll{\'a}r, and Zitnick}]{lin2014microsoft}
Lin, T.-Y.; Maire, M.; Belongie, S.; Hays, J.; Perona, P.; Ramanan, D.; Doll{\'a}r, P.; and Zitnick, C.~L. 2014.
\newblock Microsoft coco: Common objects in context.
\newblock In \emph{Computer Vision--ECCV 2014: 13th European Conference, Zurich, Switzerland, September 6-12, 2014, Proceedings, Part V 13}, 740--755. Springer.

\bibitem[{Liu et~al.(2023)Liu, Chen, Yuan, Mei, Liu, Mandic, Wang, and Plumbley}]{pmlr-v202-liu23f}
Liu, H.; Chen, Z.; Yuan, Y.; Mei, X.; Liu, X.; Mandic, D.; Wang, W.; and Plumbley, M.~D. 2023.
\newblock {A}udio{LDM}: Text-to-Audio Generation with Latent Diffusion Models.
\newblock In Krause, A.; Brunskill, E.; Cho, K.; Engelhardt, B.; Sabato, S.; and Scarlett, J., eds., \emph{Proceedings of the 40th International Conference on Machine Learning}, volume 202 of \emph{Proceedings of Machine Learning Research}, 21450--21474. PMLR.

\bibitem[{Ma et~al.(2023)Ma, Zhao, Chen, Wang, Niu, Lu, and Lin}]{ma2023glyphdraw}
Ma, J.; Zhao, M.; Chen, C.; Wang, R.; Niu, D.; Lu, H.; and Lin, X. 2023.
\newblock GlyphDraw: Learning to Draw Chinese Characters in Image Synthesis Models Coherently.
\newblock \emph{arXiv preprint arXiv:2303.17870}.

\bibitem[{Mahmood et~al.(2019)Mahmood, Ghorbani, Troje, Pons-Moll, and Black}]{mahmood2019amass}
Mahmood, N.; Ghorbani, N.; Troje, N.~F.; Pons-Moll, G.; and Black, M.~J. 2019.
\newblock AMASS: Archive of motion capture as surface shapes.
\newblock In \emph{Proceedings of the International Conference on Computer vision (ICCV)}, 5442--5451.

\bibitem[{Meng et~al.(2021)Meng, Xu, Tan, Wang, Qin, and Xu}]{meng2021mixspeech}
Meng, L.; Xu, J.; Tan, X.; Wang, J.; Qin, T.; and Xu, B. 2021.
\newblock Mixspeech: Data augmentation for low-resource automatic speech recognition.
\newblock In \emph{ICASSP 2021-2021 IEEE International Conference on Acoustics, Speech and Signal Processing (ICASSP)}, 7008--7012. IEEE.

\bibitem[{Oord et~al.(2016)Oord, Dieleman, Zen, Simonyan, Vinyals, Graves, Kalchbrenner, Senior, and Kavukcuoglu}]{oord2016wavenet}
Oord, A. v.~d.; Dieleman, S.; Zen, H.; Simonyan, K.; Vinyals, O.; Graves, A.; Kalchbrenner, N.; Senior, A.; and Kavukcuoglu, K. 2016.
\newblock Wavenet: A generative model for raw audio.
\newblock \emph{arXiv preprint arXiv:1609.03499}.

\bibitem[{Pratap et~al.(2020)Pratap, Xu, Sriram, Synnaeve, and Collobert}]{Pratap2020MLSAL}
Pratap, V.; Xu, Q.; Sriram, A.; Synnaeve, G.; and Collobert, R. 2020.
\newblock MLS: A Large-Scale Multilingual Dataset for Speech Research.
\newblock \emph{ArXiv}, abs/2012.03411.

\bibitem[{Raffel et~al.(2020)Raffel, Shazeer, Roberts, Lee, Narang, Matena, Zhou, Li, and Liu}]{2020t5}
Raffel, C.; Shazeer, N.; Roberts, A.; Lee, K.; Narang, S.; Matena, M.; Zhou, Y.; Li, W.; and Liu, P.~J. 2020.
\newblock Exploring the Limits of Transfer Learning with a Unified Text-to-Text Transformer.
\newblock \emph{Journal of Machine Learning Research}, 21(140): 1--67.

\bibitem[{Ramakrishnan et~al.(2014)Ramakrishnan, Dral, Rupp, and Von~Lilienfeld}]{ramakrishnan2014quantum}
Ramakrishnan, R.; Dral, P.~O.; Rupp, M.; and Von~Lilienfeld, O.~A. 2014.
\newblock Quantum chemistry structures and properties of 134 kilo molecules.
\newblock \emph{Scientific data}, 1(1): 1--7.

\bibitem[{Rasul et~al.(2021)Rasul, Seward, Schuster, and Vollgraf}]{rasul2021autoregressive}
Rasul, K.; Seward, C.; Schuster, I.; and Vollgraf, R. 2021.
\newblock Autoregressive denoising diffusion models for multivariate probabilistic time series forecasting.
\newblock In \emph{International Conference on Machine Learning}, 8857--8868. PMLR.

\bibitem[{Ruiz et~al.(2023)Ruiz, Li, Jampani, Pritch, Rubinstein, and Aberman}]{ruiz2023dreambooth}
Ruiz, N.; Li, Y.; Jampani, V.; Pritch, Y.; Rubinstein, M.; and Aberman, K. 2023.
\newblock Dreambooth: Fine tuning text-to-image diffusion models for subject-driven generation.
\newblock In \emph{Proceedings of the IEEE/CVF Conference on Computer Vision and Pattern Recognition}, 22500--22510.

\bibitem[{Sanghi et~al.(2022)Sanghi, Chu, Lambourne, Wang, Cheng, Fumero, and Malekshan}]{sanghi2022clip}
Sanghi, A.; Chu, H.; Lambourne, J.~G.; Wang, Y.; Cheng, C.-Y.; Fumero, M.; and Malekshan, K.~R. 2022.
\newblock Clip-forge: Towards zero-shot text-to-shape generation.
\newblock In \emph{Proceedings of the IEEE/CVF Conference on Computer Vision and Pattern Recognition}, 18603--18613.

\bibitem[{Tevet et~al.(2023)Tevet, Raab, Gordon, Shafir, Cohen-or, and Bermano}]{tevet2023human}
Tevet, G.; Raab, S.; Gordon, B.; Shafir, Y.; Cohen-or, D.; and Bermano, A.~H. 2023.
\newblock Human Motion Diffusion Model.
\newblock In \emph{The Eleventh International Conference on Learning Representations}.

\bibitem[{Thomas et~al.(2013)Thomas, Seltzer, Church, and Hermansky}]{thomas2013deep}
Thomas, S.; Seltzer, M.~L.; Church, K.; and Hermansky, H. 2013.
\newblock Deep neural network features and semi-supervised training for low resource speech recognition.
\newblock In \emph{2013 IEEE international conference on acoustics, speech and signal processing}, 6704--6708. IEEE.

\bibitem[{Tits, El~Haddad, and Dutoit(2020)}]{tits2020exploring}
Tits, N.; El~Haddad, K.; and Dutoit, T. 2020.
\newblock Exploring transfer learning for low resource emotional tts.
\newblock In \emph{Intelligent Systems and Applications: Proceedings of the 2019 Intelligent Systems Conference (IntelliSys) Volume 1}, 52--60. Springer.

\bibitem[{Tu et~al.(2019)Tu, Chen, Yeh, and Lee}]{tu2019end}
Tu, T.; Chen, Y.-J.; Yeh, C.-c.; and Lee, H.-Y. 2019.
\newblock End-to-end text-to-speech for low-resource languages by cross-lingual transfer learning.
\newblock \emph{arXiv preprint arXiv:1904.06508}.

\bibitem[{Voleti, Jolicoeur-Martineau, and Pal(2022)}]{voleti2022mcvd}
Voleti, V.; Jolicoeur-Martineau, A.; and Pal, C. 2022.
\newblock MCVD-masked conditional video diffusion for prediction, generation, and interpolation.
\newblock \emph{Advances in Neural Information Processing Systems}, 35: 23371--23385.

\bibitem[{Wang, Wu, and Pino(2020)}]{wang2020covost}
Wang, C.; Wu, A.; and Pino, J. 2020.
\newblock CoVoST 2: A Massively Multilingual Speech-to-Text Translation Corpus.
\newblock arXiv:2007.10310.

\bibitem[{Wang et~al.(2022)Wang, Du, Guo, Pan, Qin, and Zhao}]{wang2022controllable}
Wang, S.; Du, Y.; Guo, X.; Pan, B.; Qin, Z.; and Zhao, L. 2022.
\newblock Controllable data generation by deep learning: A review.
\newblock \emph{arXiv preprint arXiv:2207.09542}.

\bibitem[{Yang et~al.(2023)Yang, Yu, Wang, Wang, Weng, Zou, and Yu}]{yang2023diffsound}
Yang, D.; Yu, J.; Wang, H.; Wang, W.; Weng, C.; Zou, Y.; and Yu, D. 2023.
\newblock Diffsound: Discrete diffusion model for text-to-sound generation.
\newblock \emph{IEEE/ACM Transactions on Audio, Speech, and Language Processing}.

\bibitem[{Yi et~al.(2018)Yi, Tao, Wen, and Bai}]{yi2018language}
Yi, J.; Tao, J.; Wen, Z.; and Bai, Y. 2018.
\newblock Language-adversarial transfer learning for low-resource speech recognition.
\newblock \emph{IEEE/ACM Transactions on Audio, Speech, and Language Processing}, 27(3): 621--630.

\bibitem[{Yin et~al.(2023)Yin, Kaddour, Zhang, Nie, Liu, Kong, and Liu}]{yin2023ttida}
Yin, Y.; Kaddour, J.; Zhang, X.; Nie, Y.; Liu, Z.; Kong, L.; and Liu, Q. 2023.
\newblock TTIDA: Controllable Generative Data Augmentation via Text-to-Text and Text-to-Image Models.
\newblock \emph{arXiv preprint arXiv:2304.08821}.

\bibitem[{Yoon, Jarrett, and Van~der Schaar(2019)}]{yoon2019time}
Yoon, J.; Jarrett, D.; and Van~der Schaar, M. 2019.
\newblock Time-series generative adversarial networks.
\newblock \emph{Advances in neural information processing systems}, 32.

\bibitem[{You et~al.(2024)You, Zhong, Bao, Sun, Li, and Zhu}]{you2024diffusion}
You, Z.; Zhong, Y.; Bao, F.; Sun, J.; Li, C.; and Zhu, J. 2024.
\newblock Diffusion models and semi-supervised learners benefit mutually with few labels.
\newblock \emph{Advances in Neural Information Processing Systems}, 36.

\bibitem[{Zang and Wan(2019)}]{zang2019semi}
Zang, H.; and Wan, X. 2019.
\newblock A semi-supervised approach for low-resourced text generation.
\newblock \emph{arXiv preprint arXiv:1906.00584}.

\bibitem[{Zhang and Agrawala(2023)}]{zhang2023adding}
Zhang, L.; and Agrawala, M. 2023.
\newblock Adding conditional control to text-to-image diffusion models.
\newblock \emph{Proceedings of the International Conference on Computer Vision (ICCV)}.

\bibitem[{Zhang et~al.(2023)Zhang, Han, Ghosh, Metaxas, and Ren}]{zhang2023sine}
Zhang, Z.; Han, L.; Ghosh, A.; Metaxas, D.~N.; and Ren, J. 2023.
\newblock Sine: Single image editing with text-to-image diffusion models.
\newblock In \emph{Proceedings of the IEEE/CVF Conference on Computer Vision and Pattern Recognition}, 6027--6037.

\end{thebibliography}

\end{document}